\DeclarePairedDelimiterX\Set[1]\lbrace\rbrace{\def\given{\;\delimsize\vert\;\allowbreak}#1}
\newtheorem{theorem}{\textbf{Theorem}}
\newtheorem{proposition}{\textbf{Proposition}}[section]
\newtheorem{corollary}{\textbf{Corollary}}[section]
\newtheorem{lemma}{\textbf{Lemma}}[section]
\newtheorem*{remark}{\textbf{Remark}}
\numberwithin{equation}{section}
\DeclareMathOperator*{\argmin}{arg\,min}
\DeclareMathOperator{\repu}{RePU}
\DeclareMathOperator{\relu}{ReLU}
\DeclareMathOperator{\step}{Step}
\DeclareMathOperator{\sawtooth}{SawTooth}
\let\prime\relax
\DeclareMathOperator{\prime}{prime}
\DeclareMathOperator{\bF}{Beta}
\newcommand{\G}{\mathbb{G}}
\newcommand{\R}{\mathbb{R}}
\newcommand{\N}{\mathbb{N}}
\newcommand{\X}{\mathcal{X}}
\newcommand{\B}{\mathcal{B}}
\newcommand{\M}{\mathcal{M}}
\renewcommand{\S}{\mathbb{S}}
\newcommand{\FS}{\mathscr{F}}
\newcommand{\e}{\mathrm{e}}
\title{Embeddings between Barron spaces with higher order activation functions}
\author[1,*]{Tjeerd Jan Heeringa}
\author[1]{Len Spek}
\author[2]{Felix Schwenninger}
\author[1]{Christoph Brune}
\affil[1]{Mathematics of Imaging \& AI, University of Twente, Enschede, The Netherlands}
\affil[2]{Mathematics of Systems Theory, University of Twente, Enschede, The Netherlands}
\affil[*]{Corresponding author: t.j.heeringa@utwente.nl}
\date{2024-05-24}
\begin{document}

\maketitle
\thispagestyle{empty}

\begin{abstract}
    \noindent The approximation properties of infinitely wide shallow neural networks heavily depend on the choice of the activation function. To understand this influence, we study embeddings between Barron spaces with different activation functions. These embeddings are proven by providing push-forward maps on the measures $\mu$ used to represent functions $f$. An activation function of particular interest is the rectified power unit (RePU) given by $\repu_s(x)=\max(0,x)^s$. For many commonly used activation functions, the well-known Taylor remainder theorem can be used to construct a push-forward map, which allows us to prove the embedding of the associated Barron space into a Barron space with a $\repu$ as activation function. Moreover, the Barron spaces associated with the $\repu_s$ have a hierarchical structure similar to the Sobolev spaces $H^{s}$.
    \phantom{A}
    \\ \\
    \textbf{keywords: }Neural networks, activation functions, Barron spaces, push-forward, rectified power unit, Theory of Machine Learning
\end{abstract}

\cfoot{Page \thepage \hspace{1pt} of \pageref{LastPage}}

\section{Introduction}
For a given function $\sigma:\R\to\R$, called the activation function, we consider the set of functions that can be written using an integral representation as
\begin{equation}\label{eq:integral_formulation}
    f(x) = \int_\Omega \sigma(\braket{x}{w}+b)d\mu(w,b), \quad x\in \X:=[-1,1]^d
\end{equation}
where $\mu$ is a Radon measure on $\Omega\subseteq\R^{d+1}$. In machine learning, this represents an infinitely wide shallow neural network.

The choice of activation function is a crucial aspect in the design and training of neural networks, as it directly affects their expressivity and generalizability. The search for new activation functions that can better capture the underlying structure and patterns of the data, while avoiding common issues such as vanishing gradients and overfitting, is ongoing. The most popular choices for the activation function are the sigmoidal functions\footnote{Sigmoidal functions are monotonic increasing functions which go to constants at plus and minus infinity.} and the $\relu(x):=\max(0,x)$. Although these activation functions work in many instances, they are not the best for all instances. For example, \cite{siegel_high-order_2021} showed that in certain instances taking a higher-order version of the $\relu$ given by $\repu_s(x):=\max(0,x)^s$ yields improved approximation properties compared to using $\relu$. Similarly, \cite{hendrycks_gaussian_2020};\cite{ramachandran_searching_2023};\cite{misra_mish_2020} showed that taking smoothed versions of $\relu$ like GeLU, Swish or Mish as activation functions also accomplished this. Towards better approximation properties, we want to understand how the functions spaces associated with neural networks change when the activation function is changed.

Changing the activation function means that the associated vector space, and in particular its natural norm, gets changed too. The natural norm of the space for deep neural networks is unknown. For shallow neural networks, the Barron norm gives the natural norm for an infinitely wide neural network with activation function $\sigma$. For sigmoidal activation functions, this is given by 
\begin{equation}
    \norm{f}_{\B_{\sigma}} = \inf \int_\Omega(1+\norm{w}_{\ell^1}+\abs{b})d\abs{\mu}(w,b),
\end{equation}
where the infimum is taken over all measures $\mu$ such that \eqref{eq:integral_formulation} holds for all $x\in\X$. A vector space with this norm is called a \textit{Barron space} (associated with the activation function $\sigma$)[\cite{e_observations_2022}]. Hence, to understand what effect a change to the activation function has, we should determine what effect this change has on the underlying Barron space.

\subsection{Related work}
The Barron spaces were introduced with the motivation to create a \enquote{reasonably simple and transparent framework for machine learning}[Page 2 of \cite{e_observations_2022}]. Initially, they were introduced only for the $\relu$ and sigmoidal activation functions. In [\cite{bartolucci_understanding_2023}], Barron spaces were shown to be \textit{reproducing kernel Banach spaces} (RKBS), a Banach space analogue to \textit{reproducing kernel Hilbert spaces} (RKHS). This was used to determine what form the Barron norm should have for the \textit{rectified power unit} ($\repu$), but can easily be extended to determine what a natural norm would be for any activation function. It was proven that Barron functions have bounded point evaluations [\cite{bartolucci_understanding_2023}; \cite{spek_duality_2023}](due to them being an RKBS), Barron functions can be approximated in $L^p$ with rate $O(m^{-1/2})$ [\cite{e_towards_2020},\cite{siegel_sharp_2022}], Barron spaces have a representer theorem[\cite{parhi_banach_2021}] and more. Some of these results hold for particular activation functions (mostly $\relu$), whereas others hold for more general classes of activation functions. 

In order to extend results for the Barron space with $\relu$ as the activation function to more general activation functions, a relation between $\relu$ and a large class of activation functions was established in [\cite{li_complexity_2020}]. They determined that any activation function $\phi$ that satisfies
\begin{equation}\label{eq:li_et_al}
    \int_\R \abs{\partial^2\phi(x)}(1+\abs{x})dx <\infty
\end{equation}
can be approximated up to arbitrary precision in $L^\infty$ by a finite linear combination of $\relu$s. This covers, among others, the sigmoidal activation functions. Their result does not apply directly to the infinite width setting of the Barron spaces, but their strategy allows for an extension to the infinite width setting. 

In [\cite{caragea_neural_2020}], the relations between the Barron spaces and the related spectral Barron spaces were discussed. For $s\in\N$ the spectral Barron spaces have norm
\begin{equation}
    \norm{f}_{\B_{\FS,s}} = \inf\int_{\R^d}(1+\norm{\xi}_{\ell^1})^s\abs{\hat{f_e}(\xi)}d\xi
\end{equation}
where the infimum is taken over all extensions $f_e\in L^1(\R^d)$ of $f$. They can be seen as Barron spaces with a cosine as the activation function. It was shown that these spaces are closely related to but distinct from the Barron spaces with ReLU as the activation function. In particular, $s\geq2$ needs to hold to have an embedding into the Barron space with $\relu$ as the activation function. They also show that the Barron spaces with ReLU as the activation function embeds into that with Heaviside step function $\step$ as activation function.

In [\cite{siegel_approximation_2020}], it was shown that functions $f\in\B_{\FS,s+1}$ can be approximated in $H^s$ with rate $O(m^{-0.5})$ when using activation functions $\sigma\in W^{s,\infty}_{loc}$, with which a finite linear combination can be formed that decays sufficiently fast. Their work does not provide an embedding between the respective spaces. 

In practice, there are many more activation functions being used. ReLU6 is a linear combination of $\relu$s [\cite{howard_mobilenets_2017};\cite{maas_rectifier_2013}]. Tanh is a convolution of ArcTan with a specific kernel. SoftPlus and $\relu$ are derivatives of Sigmoid and squared $\relu$ respectively [\cite{glorot_deep_2011}]. HardSwish, SILU/Swish-1, GeLU and the growing cosine unit are ReLU6, Sigmoid, Gaussian normal CDF function, and cosine respectively multiplied by their input [\cite{howard_searching_2019};\cite{ramachandran_searching_2023};\cite{hendrycks_gaussian_2020};\cite{noel_growing_2021}]. What these and other changes to the activation do to the associated Barron space is unknown. 

\subsection{Our contribution}
In this work, we show that many of these changes to the activation function lead to an embedding between the respective Barron spaces. The main idea is that we explicitly construct a push-forward map $\Theta$ for two given activation functions $\sigma$ and $\phi$ so that
\begin{equation}
    f(x) = \int_\Omega \sigma(\braket{x}{w}+b)d\mu(w,b) = \int_\Omega \phi(\braket{x}{w}+b)d\Theta_{\#}\mu(w,b), \quad x\in \X.
\end{equation}
When we find a map $\Theta$, we can use it to show that the Barron norm $\norm{f}_{\B_{\phi}}$ is finite and determine the constant of embedding by using the relation $\Theta$ induces between $\Theta_{\#}\mu$ and $\mu$. For example, for two non-ReLU Lipschitz continuous activation functions $\sigma$ and $\phi$ we need $\Theta$ to be such that
\begin{equation}\label{eq:push_forward_example}
    \norm{f}_{\B_{\phi}} \leq \int_\Omega 1+\norm{w}_{\ell^1}+\abs{b}d\abs{\Theta_{\#}\mu}(w,b) \lesssim \int_\Omega 1+\norm{w}_{\ell^1}+\abs{b}d\abs{\mu}(w,b)
\end{equation}
for all $\mu$ satisfying \eqref{eq:integral_formulation} to get an embedding. The embeddings can be grouped into those in which at least one of the two activation functions is a $\repu_s$ for some $s\geq 0$ and those in which neither activation function is a $\repu_s$. The former is discussed in \Cref{sec:embeddings}, whilst the latter is discussed in \Cref{sec:generic}. Additionally, in \Cref{sec:taylor} we show how the push-forward strategy can be used to provide embeddings from non-Barron spaces into Barron spaces by proving the embedding of the spectral Barron spaces into the Barron spaces with $\repu$ activation. The proven embeddings are summarized in \Cref{th:new}.
\begin{theorem}\label{th:new}
Let $\psi$ and $\phi$ be Lipschitz activation functions. If
\begin{equation}\label{eq:th_new}
    \phi(x) = \int_{\R^2}\psi(xw +b)d\gamma(w,b)
\end{equation}
for all $x\in\R$ and for some measure $\gamma\in \M(\R^2)$ satisfying
\begin{equation}
    \int_{\R^2}(1+\abs{w}+\abs{b})d\abs{\gamma}(w,b)<\infty,
\end{equation}
then $\B_\phi\hookrightarrow\B_{\psi}$. \\
Moreover, 
\begin{enumerate}
    \item $\B_\psi\hookrightarrow\B_{\repu_1}$ whenever $\psi\in C^1(\R)$ with $\partial^2\psi\in L^1(\R)$,
    \item $\B_\psi\hookrightarrow\B_{\repu_s}$ whenever $s\geq 1$, $\psi\in C^{\lceil s\rceil}(\R)$ with the right Caputo derivative $\partial^{s+1}_+\psi\in L^1((0,\infty)$ and the left Caputo derivative $\partial^{s+1}_-\psi\in L^1((-\infty,0)$, and $\Omega$ is bounded,
    \item $\B_{\repu_t}\hookrightarrow\B_{\repu_s}$ for $0\leq s \leq t$,
    \item $\B_{\FS,m+1}\hookrightarrow\B_{\repu_m}$ for all $m\in\N$.
\end{enumerate}
\end{theorem}
The main statements shows that embeddings between Barron spaces with Lipschitz activation functions can be determined by only looking at the relation between their activation functions. In the four points below the main statement, the first two are generalizations of \cite{li_complexity_2020}. The remaining two points are generalizations of \cite{caragea_neural_2020}. In particular, the first point relaxes condition \eqref{eq:li_et_al} from \cite{li_complexity_2020} to $\partial^2\psi\in L^1(\R)$. The second point extends the result to higher smoothness at the cost of requiring $\Omega$ to be bounded. The third point refines the embedding result $\B_{\repu_1}\hookrightarrow\B_{\repu_0}$ of Lemma 7.1 1) from \cite{caragea_neural_2020} to one covering all $\repu_s$ with $s\geq 0$. The fourth point generalizes the embedding $\B_{\FS,m+1}\hookrightarrow\B_{\repu_m}$ of Lemma 7.1 3) from \cite{caragea_neural_2020} for $m=1$ to all integers $m\in \N$.

Although the first two points of \Cref{th:new} are given with the requirement that $\partial \psi$ and $\partial^{s+1} \psi$ are in $L^1(\R)$ respectively, for the embedding to hold it is sufficient, at least of integer $s$, that the (distributional) derivatives exist as measures.

The second point of \Cref{th:new} has been written in terms of Caputo derivatives. Whilst there exists many different fractional derivative [\cite{de_oliveira_review_2014}], each generalizing different properties of the classical derivative, this version allows us to take a Taylor extension with the proper form for the embedding to hold. Morever, the second point of \Cref{th:new} has been written in terms of a left and a right Caputo derivative. In the proof we use a Taylor expansion around zero. For classical derivatives the terms in the expansion are the same when we consider values above or below zero. For integer $s$ the Caputo derivatives $\phi^{s+1}_+$ and $\phi^{s+1}_-$ match with the classical derivative $\phi^{s+1}$ up to a sign. However, for non-integer $s$ we get that fractional derivatives, and in particular the Caputo derivatives, depend on the direction. Hence, we obtain in \Cref{th:new} point 2 a condition on $\phi^{s+1}_+$ and $\phi^{s+1}_-$ instead of just $\phi^{s+1}$.

Observe that the form of the integral in \eqref{eq:th_new} is similar to that of the integrals in \eqref{eq:push_forward_example}. This means we can interpret the embedding in point 1) of \Cref{th:new} as follows: If we have a shallow neural network with $\phi$ as activation function representing the function $f$, then we can replace each neuron in the hidden layer by a (possibly infinite) number of neurons with $\psi$ as activation function. $\gamma$ describes how the weights and biases of the neurons in the network should be adjusted. After the replacement, the network will still represent $f$. This interpretation is visualized in \Cref{fig:replacement}.

\begin{figure}
    \centering
    \includegraphics[width=0.8\textwidth]{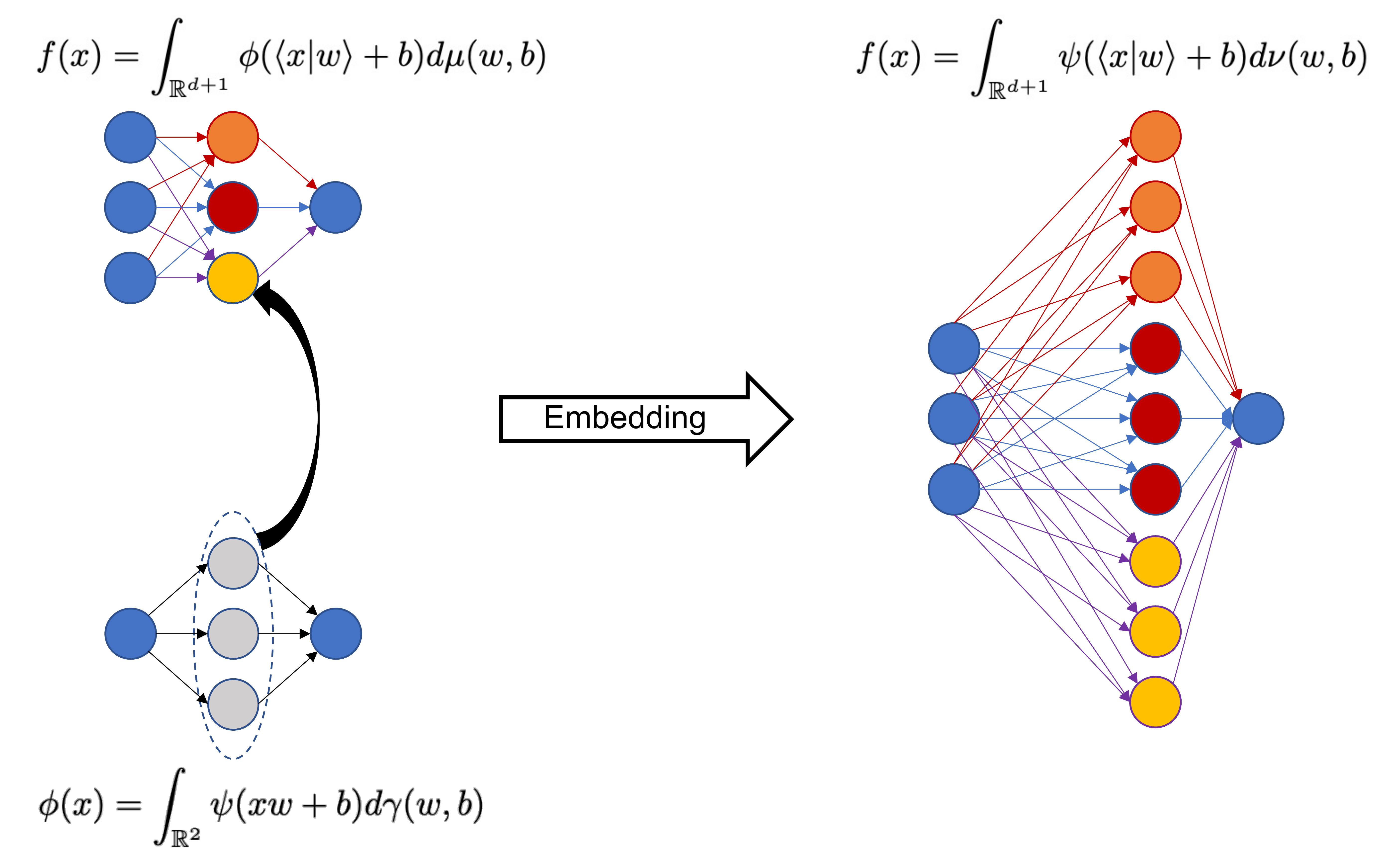}
    \caption{Each circle represents a neuron, and arrows represent connections between neurons. On the left, a network with $\phi$ as activation function representing $f$ is shown. The activation function $\phi$ can be represented using a shallow neural network with 3 neurons in the hidden layer and activation function $\psi$. On the right, a network with $\psi$ as activation function representing $f$ is shown. The network representing $\phi$ is used to construct the network on the right from that on the top left. Colors have been added to track which neuron on the right corresponds to which on the top left.}
    \label{fig:replacement}
\end{figure}

\subsection{Notation}
Denote with $\N$ the natural numbers without zero. Derivatives of $f$ are denoted with $\partial f$. When $f$ is multivariate, we use multi-indices to denote the partial derivatives. When working with fractional derivatives, we consider the left and right Caputo given by
\begin{subequations}
\begin{align}
    \partial^\alpha\phi_+(x) &= \frac{1}{\Gamma(n - \alpha)}\int_y^x(x-t)^{n-\alpha -1}\partial^n f(t)dt \qquad x \geq y\\
    \partial^\alpha\phi_-(x) &= \frac{(-1)^n}{\Gamma(n - \alpha)}\int_x^y(t-x)^{n-\alpha -1}\partial^n f(t)dt \qquad x \leq y
\end{align}    
\end{subequations}
for $n-1\leq \alpha<n$, respectively, with the boundary points $y$ made clear from context. Radon measures are regular signed Borel measures with bounded total variation. The space of Radon measures $\mathcal{M}(\Omega)$ on a locally compact Hausdorff space $\Omega$ is the continuous dual of the continuous functions vanishing at infinity, $C_0(\Omega)^\ast=\mathcal{M}(\Omega)$. The total variation measure of a measure $\mu\in \mathcal{M}(\Omega)$ is denoted by $\abs{\mu}$. The Dirac measure is given by
\begin{equation}
    \delta_{w}(A) = \begin{cases} 1 & w\in A \\ 0 & w\not\in A \end{cases}
\end{equation}
for Borel sets $A\subseteq \Omega$. For a map $\Theta$ defined by
\begin{equation}
    \Theta: X\to Y, \quad x\mapsto \Theta(x),
\end{equation}
we call the measure $\nu:=\Theta_{\#}\mu$ the push-forward of $\mu$ along the map $\Theta$ such that 
\begin{equation}
    \int_Y f(y) d\nu(y) = \int_X f(\Theta(x))d\mu(x),
\end{equation}
for all $\nu$-measurable functions $f$. A normed vector space $A$ embeds into another normed vector space $B$ if and only if $A\subseteq B$ and $\norm{f}_B \lesssim \norm{f}_A$ for all $f\in A$, where $\lesssim$ means that the inequality holds up to a constant $C>0$ independent of $f$. If $f\in L^1(\R^d)$, then
\begin{equation}
    \hat{f}(\xi) = \frac{1}{(2\pi)^d}\int_{\R^d}e^{-i\braket{x}{\xi}}f(x)d\xi,\quad \xi\in\R^d
\end{equation}
denotes the Fourier transform of $f$. As is common, we also use $\hat{f}$ if the Fourier transform exists in a generalized way.

\section{Embeddings between Barron spaces involving \texorpdfstring{$\repu$}{RePU}}
\label{sec:embeddings}
In this section, we start by defining the Barron spaces in \Cref{sec:barron_spaces}. We proceed by showing that the Barron spaces with $\repu$ as the activation function have a hierarchical structure. Next to that, we show that Barron spaces with an activation function for which the weak derivative is in $L^1(\R)$ embed into the Barron space with $\relu$ as activation function. After assuming $\Omega$ is bounded, we extend this to: Barron spaces with an activation function $\sigma\in C^{\lceil s \rceil}(\R)$ for which $\partial^{s+1}\sigma\in L^1(\R)$ embed in a Barron spaces with a $\repu_s$. The former we do in \Cref{sec:hierarchy} and the latter two in \Cref{sec:abs_cont_functions}.

\subsection{Barron spaces}\label{sec:barron_spaces}
The definition of the Barron spaces that we will be using is an adaption of Definition A.2 of [\cite{e_kolmogorov_2021}]. We use signed Radon measures instead of probability measures and have defined a natural norm for when the activation function is a $\repu$.

Fix $d\in\N$. Let $\X=[-1,1]^d$ and $\Omega\subseteq \R^{d+1}$. When we write $(w,b)\in\Omega$, we mean that $w\in\R^d$ and $b\in\R$. We use the $\ell^1$ norm for $(w,b)\in\Omega$ and the $\ell^\infty$ norm for $x\in\X$ so that $\norm{(w,b)}_{\ell^1}=\norm{w}_{\ell^1}+\abs{b}$ and $\abs{\braket{x}{w}}\leq \norm{w}_{\ell^1}$. We call $\sigma: \R\to\R$ an activation function when it is Lipschitz continuous or $\sigma(x)=\repu_s(x):=x^s\step(0,x)=\max(0,x)^s$ for some $s\geq 0$, where the last equality holds for all $s>0$ only. Consider functions $f:\X\to\R$ given by \begin{equation}
    f(x) = \int_\Omega \sigma(\braket{x}{w}+b)d\mu(w,b), \quad x\in \X.
\end{equation}
Since several distinct measures $\mu$ can describe the same function $f$, we group them into 
\begin{equation}
    \G_{\sigma,f|\Omega} = \Set[\bigg]{ \mu\in \M(\Omega) \given \forall x\in \X:\;f(x) = \int_\Omega \sigma(\braket{x}{w}+b) d\mu(w,b)}.
\end{equation}
The Barron norm is given by
\begin{equation}\label{eq:norm_lipschitz}
    \norm{f}_{\B_\sigma|\Omega} = \inf_{\mu\in \G_{\sigma,f|\Omega}}\int_\Omega(1+\norm{w}_{\ell^1}+\abs{b})d\abs{\mu}(w,b),
\end{equation}
unless $\sigma=\repu_s$ for some $s\geq 0$ in which case
\begin{equation}\label{eq:norm_repu}
    \norm{f}_{\B_\sigma|\Omega} = \inf_{\mu\in \G_{\sigma,f|\Omega}}\int_\Omega(\norm{w}_{\ell^1}+\abs{b})^sd\abs{\mu}(w,b).
\end{equation}
The Barron space for a given activation function $\sigma$ is given by
\begin{equation}
    \B_\sigma|\Omega = \Set[\bigg]{ f:\X \to \R \given \norm{f}_{\B_\sigma|\Omega} < \infty}.
\end{equation}
When $\Omega=\R^{d+1}$, we omit the $|\Omega$ in the expressions above.
\begin{remark}
With this choice of norm we get
\begin{equation}
    \abs{f(x)} \leq C(x,\sigma)\norm{f}_{\B_\sigma}
\end{equation}
for all $f\in\B_{\sigma}$ for both the Lipschitz activation functions and the $\repu$s. The integrands in \eqref{eq:norm_lipschitz} and \eqref{eq:norm_repu} are polynomial of order $p$, since $\sigma$ grows with order $p$. For the $\repu$ we have no $1+$ in the integrand, since we can rescale $\mu$ using the homogeneity of the $\repu$s so that $f$ stays the same but $\norm{\mu}_{\M(\Omega)}\to0$.
\end{remark}

\subsection{Hierarchy in the \texorpdfstring{$\repu$}{RePU}}\label{sec:hierarchy}
For continuous functions it is well-known that they have a hierarchical structure, i.e. $C^s\hookrightarrow C^t$ for all $s,t\in\N$ such that $t\leq s$. The following proposition shows that $\B_{\repu_s}$ has a similar hierarchy. It is a generalization of 1) from Lemma 7.1 of \cite{caragea_neural_2020}. 
\begin{proposition}\label{prop:repu_embedding}
For $0\leq s\leq t$ we have $\B_{\repu_t}\hookrightarrow\B_{\repu_s}$.
\end{proposition}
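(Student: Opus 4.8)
The plan is to re-express $\repu_s$ in terms of $\repu_t$ by a single integral identity and then push the representing measure forward along the associated map, in the same spirit as the proofs of \Cref{prop:taylor_embedding} and \Cref{prop:taylor_embedding_relu_s}. The case $s=t$ is trivial, so assume $s>t$. The identity I would use is the Euler--Beta (equivalently, Cauchy repeated--integration) formula
\begin{equation}\label{eq:repu_beta}
    \repu_s(z)=\frac{1}{\mathrm{B}(s-t,t+1)}\int_0^\infty\repu_t(z-u)\,u^{s-t-1}\,du,\qquad z\in\R ,
\end{equation}
which makes sense since $s-t\geq1$ and $t+1\geq2$. Because $x\in\X=[-1,1]^d$ gives $\abs{\braket{x}{w}+b}\leq\norm{w}_{\ell^1}+\abs{b}=:\theta_{w,b}$, the integrand in \eqref{eq:repu_beta} vanishes for $u\geq\theta_{w,b}$ when $z=\braket{x}{w}+b$; truncating the integral at $\theta_{w,b}$ and substituting $u=\theta_{w,b}\tau$ yields
\begin{equation}\label{eq:repu_beta_truncated}
    \repu_s(\braket{x}{w}+b)=\frac{\theta_{w,b}^{s-t}}{\mathrm{B}(s-t,t+1)}\int_0^1\repu_t\!\big(\braket{x}{w}+b-\theta_{w,b}\tau\big)\,\tau^{s-t-1}\,d\tau ,\qquad x\in\X .
\end{equation}

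I would verify \eqref{eq:repu_beta} directly: both sides vanish for $z\leq0$ (on the integration range $z-u\leq0$ and $t\geq1$), while for $z>0$ only $u\in[0,z]$ contributes and the substitution $u=zv$ turns $\int_0^z(z-u)^tu^{s-t-1}\,du$ into $z^s\int_0^1(1-v)^tv^{s-t-1}\,dv=z^s\,\mathrm{B}(s-t,t+1)$. Now fix $f\in\B_{\repu_s}$ and $\mu\in\G_{\repu_s,f}$ with $\int_\Omega(\norm{w}_{\ell^1}+\abs{b})^s\,d\abs{\mu}(w,b)<\infty$. Substituting \eqref{eq:repu_beta_truncated} into $f(x)=\int_\Omega\repu_s(\braket{x}{w}+b)\,d\mu(w,b)$ and interchanging the order of integration -- legitimate since the integrand is dominated by a constant multiple of $\theta_{w,b}^s$, which is $\abs{\mu}$-integrable -- gives
\begin{equation}\label{eq:repu_nu}
    f(x)=\int_\Omega\repu_t(\braket{x}{w}+b)\,d\nu(w,b),\qquad x\in\X ,
\end{equation}
where $\nu$ is the push-forward of the weighted measure $h\cdot(\mu\otimes\lambda)$ -- with $\lambda$ the Lebesgue measure on $[0,1]$ and $h((w,b),\tau)=\mathrm{B}(s-t,t+1)^{-1}\theta_{w,b}^{s-t}\tau^{s-t-1}$ -- along the map $((w,b),\tau)\mapsto(w,\,b-\theta_{w,b}\tau)$. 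Moreover $\nu\in\M(\Omega)$, since $\abs{\nu}(\Omega)\leq(s-t)^{-1}\mathrm{B}(s-t,t+1)^{-1}\int_\Omega\theta_{w,b}^{s-t}\,d\abs{\mu}(w,b)<\infty$ using $\theta_{w,b}^{s-t}\leq1+\theta_{w,b}^s$ and the finiteness of $\mu$; hence $\nu\in\G_{\repu_t,f}$.

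It remains to estimate the $\B_{\repu_t}$-weight of $\nu$. For $\tau\in[0,1]$ the image neuron satisfies $\norm{w}_{\ell^1}+\abs{b-\theta_{w,b}\tau}\leq\norm{w}_{\ell^1}+\abs{b}+\theta_{w,b}\leq2\theta_{w,b}$, whence
\begin{equation}
\begin{aligned}
    \norm{f}_{\B_{\repu_t}}\leq\int_\Omega(\norm{w}_{\ell^1}+\abs{b})^t\,d\abs{\nu}(w,b)
    &\leq\frac{1}{\mathrm{B}(s-t,t+1)}\int_\Omega\int_0^1(2\theta_{w,b})^t\,\theta_{w,b}^{s-t}\,\tau^{s-t-1}\,d\tau\,d\abs{\mu}(w,b)\\
    &=\frac{2^t}{(s-t)\,\mathrm{B}(s-t,t+1)}\int_\Omega(\norm{w}_{\ell^1}+\abs{b})^s\,d\abs{\mu}(w,b) .
\end{aligned}
\end{equation}
Since $(s-t)\,\mathrm{B}(s-t,t+1)=\tfrac{(s-t)!\,t!}{s!}=\binom{s}{t}^{-1}$, taking the infimum over $\mu\in\G_{\repu_s,f}$ gives $\norm{f}_{\B_{\repu_t}}\leq2^t\binom{s}{t}\norm{f}_{\B_{\repu_s}}$, which is the asserted embedding.

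The main obstacle -- indeed the only non-routine point -- is that the $u$-integral in \eqref{eq:repu_beta} runs over an unbounded range, so the naive push-forward kernel carries infinite mass; the fix is to use $\abs{\braket{x}{w}+b}\leq\theta_{w,b}$ to truncate, which makes the kernel genuinely $(w,b)$-dependent and forces the tensor-product-with-Lebesgue construction above, after which one must check that the Fubini interchange is valid and that $\nu$ has finite total variation -- both of which reduce to the elementary bound $\theta_{w,b}^{s-t}\leq1+\theta_{w,b}^s$ together with $\mu$ being a Radon (hence finite) measure. As in \Cref{prop:taylor_embedding_relu_s}, the push-forward may move a neuron's bias outside a prescribed bounded $\Omega$; this is harmless, since one may take $\Omega=\R^{d+1}$ in the definition of $\B_{\repu_t}$, or first replace each neuron $(w,b)$ by $(w/\theta_{w,b},\,b/\theta_{w,b})$, which leaves the $\B_{\repu_s}$-norm unchanged because $\repu_s$ is positively homogeneous of degree $s$.
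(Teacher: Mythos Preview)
Your argument is correct. The key identity \eqref{eq:repu_beta} is easily verified as you indicate, the truncation at $\theta_{w,b}$ is legitimate because $\repu_t$ vanishes on the negative half-line, the Fubini step is justified by the pointwise domination $\repu_t(\braket{x}{w}+b-\theta_{w,b}\tau)\leq (2\theta_{w,b})^t$, and the push-forward bookkeeping is exactly parallel to that in \Cref{prop:taylor_embedding_relu_s}. The remark at the end about $\Omega$ and homogeneity is appropriate and matches how the paper itself integrates over $\R^{d+1}$ in its proof.

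The route, however, differs from the paper's. There the embedding is obtained \emph{inductively}: one proves the single-step case $\B_{\repu_{t+1}}\hookrightarrow\B_{\repu_t}$ via the simpler identity $\repu_{t+1}(y)=(t+1)\int_0^c\repu_t(y-u)\,du$ (valid for $\abs{y}\leq c$), which is precisely your \eqref{eq:repu_beta} specialized to $s=t+1$ since $\mathrm{B}(1,t+1)=1/(t+1)$, and then chains $s-t$ such steps. Your direct jump from $s$ to $t$ via the Euler--Beta kernel $u^{s-t-1}$ is slightly more general and yields the constant $2^t\binom{s}{t}$, whereas the paper's iterated bound gives $\prod_{j=t}^{s-1}(2^{j+1}-1)$; for $s-t\geq 2$ your constant is substantially sharper (e.g.\ $s=4$, $t=1$ gives $8$ versus $315$). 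The inductive proof has the virtue of needing only the most elementary integral identity; your proof buys a one-shot argument and a tighter embedding constant at the modest cost of invoking the Beta integral.
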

\begin{proof}
Let $c>0$. $\repu$ satisfies 
\begin{equation}
    \repu_t(x) = \frac{1}{\bF_1(t-s,s+1)}\int_0^cu^{t-s-1}\repu_s(x-u)du
\end{equation}
for all $\abs{x}\leq c$ and $0\leq s<t$, since
\begin{equation}
    \bF_1(t-s,s+1)x^{t} = \int_0^xy^{t-s-1}(x-y)^sdy
\end{equation}
for all $x>0$ with
\begin{equation}
    \bF_z(a,b) = \int_0^z t^{a-1}(1-t)^{b-1}dt.
\end{equation}
\\
Let $\mu\in \G_{\repu_{t},f}$ for $f\in \B_{\repu_{t}}$. Observe that
\begin{equation}
\begin{aligned}
    f(x) &= \int_{\R^{d+1}}\repu_{t}(\braket{x}{w}+b)d\mu(w,b) \\
    &= \int_{\R^{d+1}}\int_0^{\theta_{w,b}}\frac{u^{t-s-1}}{\bF_1(t-s,s+1)}\repu_s(\braket{x}{w}+b-u)dud\mu(w,b) \\
    &= \int_{\R^{d+1}}\int_0^{1}\frac{\theta_{w,b}^{t-s}v^{t-s-1}}{\bF_1(t-s,s+1)}\repu_s(\braket{x}{w}+b-\theta_{w,b}v)dvd\mu(w,b) && u=\theta_{w,b}v\\
    &= \int_{\R^{d+1}}\repu_t(\braket{x}{w}+b)d\nu(w,b),
\end{aligned}
\end{equation}
where $\theta_{w,b}:=\norm{w}_{\ell^1}+\abs{b}$, $\lambda$ is the Lebesgue measure on $[0,1]$ and the measure
\begin{equation}\label{eq:repu_hierarchy_measure}
    \nu(A) = \int_A\int_0^1\frac{\theta_{w,b}^{t-s}v^{t-s-1}}{\bF_1(t-s,s+1)}\Theta_{\#}(\mu\otimes\lambda)((w,b),v)
\end{equation}
for all Borel sets $A\subseteq\R^{d+1}$ is the push-forward of $\mu\otimes\lambda$ along the map 
\begin{equation}
    \Theta: \R^{d+1} \times [0,1] \to \R^{d+1}, \; ((w,b),v) \mapsto (w,b-\theta_{w,b}v).
\end{equation}
Hence, $\nu\in \G_{\repu_{s},f}$ for $f\in \B_{\repu_{s}}$. Furthermore,
\begin{equation}
\begin{aligned}
    \norm{f}_{\B_{\repu_s}} 
    &\leq \int_{\R^{d+1}}(\norm{w}+\abs{b})^sd\abs{\nu}(w,b) \\
    &= \int_{\R^{d+1}}\int_0^1\frac{\theta_{w,b}^{t-s}v^{t-s-1}}{\bF_1(t-s,s+1)}(\norm{w}+\abs{b-\theta_{w,b}v})^sdvd\abs{\mu}(w,b) \\
    &\leq \int_{\R^{d+1}}\int_0^1\frac{\theta_{w,b}^{t-s}v^{t-s-1}}{\bF_1(t-s,s+1)}(\norm{w}+\abs{b}+\theta_{w,b}v)^sdvd\abs{\mu}(w,b) \\
    &= \int_{\R^{d+1}}\int_0^1\frac{(1+v)^sv^{t-s-1}}{\bF_1(t-s,s+1)}(\norm{w}+\abs{b})^tdvd\abs{\mu}(w,b) \\
    &= (-1)^{s-t}\frac{\bF_{-1}(t-s,s+1)}{\bF_1(t-s,s+1)}\int_{\R^{d+1}}(\norm{w}+\abs{b})^td\abs{\mu}(w,b).
\end{aligned}
\end{equation}
Taking the infimum over $\mu\in \G_{\repu_{t},f}$ gives
\begin{equation}
    \norm{f}_{\B_{\repu_s}} \leq (-1)^{s-t}\frac{\bF_{-1}(t-s,s+1)}{\bF_1(t-s,s+1)}\norm{f}_{\B_{\repu_t}}.
\end{equation}
\end{proof}
Note that the embedding bound can be expressed equivalently as
\begin{equation}
    \norm{f}_{\B_{\repu_s}} \leq (-1)^{-\epsilon}\frac{\bF_{-1}(\epsilon,s+1)}{\bF_1(\epsilon,s+1)}\norm{f}_{\B_{\repu_{s+\epsilon}}}
\end{equation}
for all $\epsilon>0$ and all $s\geq 0$, with in particular
\begin{equation}
    \norm{f}_{\B_{\repu_s}} \leq (-1)^{-1}\frac{\bF_{-1}(1,s+1)}{\bF_1(1,s+1)}\norm{f}_{\B_{\repu_{s+1}}} = (2^{s+1}-1)\norm{f}_{\B_{\repu_{s+1}}}
\end{equation}
for all $s\geq 0$.

\subsection{Absolutely continuous activation functions}\label{sec:abs_cont_functions}
To show that the Barron spaces with an activation function $\sigma\in C^{\lceil s \rceil}(\R)$ for which $\partial^{s+1}\sigma\in L^1(\R)$ embed into a Barron space with a $\repu$ as activation function, we first prove some technical lemmas concerning $\repu_s$, which are important for the proofs of the embeddings. These lemmas will be reused in \Cref{sec:taylor}. 

Recall that the rectified power unit is given by
\begin{equation}
    \repu_s(x) = x^s\step(x)=\max(0,x)^s
\end{equation}
for $s\geq 0$ with the last equality holding for $s>0$ only. The embeddings proven in this section rely on the tie between $\repu_s$ and the Taylor remainder theorem to construct the push-forward map. The integral form of the Taylor remainder theorem states that a function $\phi\in C^{m}(\R)$ of which $\partial^{m}\phi$ is absolutely continuous on the closed interval $[a,b]$, can be written as
\begin{equation}\label{eq:taylor_remainder}
    \phi(x) = \underbrace{\sum_{k=0}^{m}\frac{\partial^k\phi(y)}{k!}(x-y)^k}_{\text{sum}} + \underbrace{\int_y^x\frac{\partial^{m+1}\phi(t)}{m!}(x-t)^{m}dt}_{\text{remainder}}
\end{equation}
for all $x,y\in[a,b]$. This well-known theorem follows straightforwardly from several applications of integration by parts. There are several different fractional Taylor expansions which differ in the type of fractional derivative chosen and whether the derivatives inside the sum are fractional. For the embedding with $s$ not integer, we choose the fractional Taylor expansion with Caputo derivatives and integer derivatives inside the sum [\cite{trujillo_riemannliouville_1999,odibat_generalized_2007}]. With this choice, it states that a function $\phi\in C^{\lceil s\rceil}(\R)$ of which $\partial^{s+1}\phi$ is locally integrable on the closed interval $[a,b]$, can be written as
\begin{equation}\label{eq:taylor_remainder_fractional}
    \phi(x) = \underbrace{\sum_{k=0}^{\lceil s\rceil}\frac{\partial^k\phi(y)}{k!}(x-y)^k}_{\text{sum}} + \underbrace{\begin{cases}
        \int_y^x\frac{\partial^{s+1}\phi_+(t)}{\Gamma(s+1)}(x-t)^{s}dt & x > y \\
        \int_x^y\frac{\partial^{s+1}\phi_-(t)}{\Gamma(s+1)}(t-x)^{s}dt & x < y \\
    \end{cases}}_{\text{remainder}}
\end{equation}
for all $x,y\in[a,b]$, where $\partial^{s+1}\phi_+$ and $\partial^{s+1}\phi_-$ are the left and right Caputo derivative given by
\begin{subequations}
\begin{align}
    \partial^\alpha\phi_+(x) &= \frac{1}{\Gamma(n - \alpha)}\int_y^x(x-t)^{n-\alpha -1}\partial^n f(t)dt \qquad x \geq y\\
    \partial^\alpha\phi_-(x) &= \frac{(-1)^n}{\Gamma(n - \alpha)}\int_x^y(t-x)^{n-\alpha -1}\partial^n f(t)dt \qquad x \leq y
\end{align}    
\end{subequations}
for $n-1\leq \alpha<n$ [\cite{de_oliveira_review_2014}]. For fixed $y$, both the series and remainder parts can be written in the form of \eqref{eq:integral_formulation} using a suitably chosen measure. To prove this, we use the following two lemmas. The first lemma deals with the sum part, whereas the second lemma deals with the remainder part. For the proofs of both lemma's, we refer the reader to the appendix.

\begin{lemma}\label{lemma:series_repu_s}
Let $p:\R^{d}\to \R$ be a polynomial of degree less or equal to $m\in \N$. Then there exists a measure $\nu\in\M(\R^{d+1})$ such that
\begin{equation}\label{eq:20}
    p(x) = \int_{\R^{d+1}} \repu_{m}(\braket{x}{w}+b)d\nu(w,b), \qquad x\in \R^{d}.
\end{equation}
\end{lemma}

\begin{lemma}\label{lemma:integral_bound_fix_lemma}
Let $s\geq 0$ and $c>0$. When $f\in L^1([-c,c])$, we have
\begin{equation}\label{eq:integral_bound_fix_lemma}
    \int_0^z f(u)(z-u)^sdu = \int_0^c f(u)\repu_s(z-u)+(-1)^{s-1}f(-u)\repu_s(-z-u)du
\end{equation}
for all $z\in [-c,c]$.
\end{lemma}

From \Cref{lemma:series_repu_s} and \Cref{lemma:integral_bound_fix_lemma} it follows that both the series part and the remainder part of a function satisfying the Taylor remainder theorem can be written in terms of $\repu_s$. Hence, it may seem logical that an embedding of $\B_{\phi}$ into $\B_{\repu_s}$ exists if $\phi$ satisfies the requirements for the Taylor remainder theorem. Without additional assumptions, this does not hold for all $s\geq 0$. For $0\leq s <1$, the Barron norm is not defined for all considered activation functions. Hence, we don't discuss this case any further in this work. For $s>1$, this does not hold due to the different exponent in the norm for $\repu_s$ compared to the exponent in the Barron norm for $\phi$. We discuss this later in this section in more detail. For $s=1$, this suggested embedding exists without additional assumptions. This is shown in the following proposition.
\begin{proposition}\label{prop:taylor_embedding}
If $\phi\in C^1(\R)$ such that $\partial^2\psi\in L^1(\R)$,
then we have $\B_{\phi}\hookrightarrow \B_{\relu}$ with
\begin{equation}\label{eq:gamma_tilde_embedding_bound}
    \norm{f}_{\B_{\relu}} \leq \gamma(\phi)\norm{f}_{\B_{\phi}}
\end{equation}
for all $f\in \B_\psi$, where
\begin{equation}\label{eq:gamma_tilde}
    \gamma(\phi) := \inf_{y\in\R}\bigg( \abs{\phi(y)}+2\abs{\partial\phi(y)}+2(1+\abs{y})\int_\R\abs{\partial^2\phi(z)}dz\bigg).
\end{equation}
\end{proposition}
\begin{proof}
From the triangle inequality, it follows immediately that 
\begin{equation}
    \abs{\braket{x}{w}+b-y} \leq \norm{w}_{\ell^1}+\abs{b}+\abs{y} := \theta_{w,b,y}
\end{equation}
for all $(x,y,w,b)\in [-1,1]^d\cross \R\cross \R^d\cross \R$. From the Taylor remainder theorem, it follows that for given $(w,b)\in \R^{d+1}$ and $y\in\R$
\begin{equation}
    \phi(\braket{x}{w}+b) = \phi(y) + \partial^1\phi(y)(\braket{x}{w}+b) + \int_y^{\braket{x}{w}+b}\partial^2\phi(t)(\braket{x}{w}+b)-t)dt
\end{equation}
for all $x\in\X$. After the change of coordinate $v=t-y$, this becomes
\begin{equation}\label{eq:28}
    \phi(\braket{x}{w}+b) = \phi(y) + \partial^1\phi(y)(\braket{x}{w}+b) + \int_0^{\braket{x}{w}+b-y}\partial^2\phi(v+y)(\braket{x}{w}+b)-v-y)dv.
\end{equation}
From \Cref{lemma:integral_bound_fix_lemma}, it follows that \eqref{eq:28} is equivalent to 
\begin{equation}
\begin{aligned}
    \phi(\braket{x}{w}+b) = \phi(y) + \partial^1\phi(y)(\braket{x}{w}+b) + \int_0^{\theta_{w,b,y}}&\partial^2\phi(v+y)\relu(\braket{x}{w}+b-v-y)\\&+\partial^2\phi(-v+y)\relu(\braket{x}{-w}-b-v+y)dv
\end{aligned}
\end{equation}
for all $x\in \X$. After the change of coordinate $\theta_{w,b,y}u=v$ and using \eqref{eq:identity}, this becomes
\begin{equation}\label{eq:phi_relu}
\begin{aligned}
    \phi(\braket{x}{w}+b) &= \phi(y)\relu(1) + \partial^1\phi(y)\bigg(\relu(\braket{x}{w}+b)-\relu(\braket{x}{-w}-b)\bigg) \\&+ \int_0^{1}\theta_{w,b,y}\partial^2\phi(\theta_{w,b,y}u+y)\relu(\braket{x}{w}+b-\theta_{w,b,y}u-y)\\&+\int_0^{1}\theta_{w,b,y}\partial^2\phi(-\theta_{w,b,y}u+y)\relu(\braket{x}{-w}-b-\theta_{w,b,y}u+y)du.
\end{aligned}
\end{equation}
Let $\mu\in \G_{\phi,f}$ for $f\in \B_{\phi}$. Observe that using \eqref{eq:phi_relu} as a substitution we get
\begin{equation}
    f(x) = \int_{\R^{d+1}}\phi(\braket{x}{w}+b)d\mu(w,b) = \int_{\R^{d+1}}\relu(\braket{x}{w}+b)d\nu(w,b), 
\end{equation}
where the measure $\nu=\sum_{i=1}^5\nu_i$ is the sum of measures formed from the measures
\begin{equation}
\begin{aligned}
    \nu_1 &= \phi(y)\mu(\Omega)\delta_{(0,1)} \\ 
    \nu_2 &= \partial^1\phi(y)\mu \\ 
    \nu_3 &= -\partial^1\phi(y)\Theta_{\#}^0\mu \\ 
    \nu_4(A) &= \int_A\int_0^1 \theta_{w,b,y}\partial^2\phi(\theta_{w,b,y}u+y)d\Theta_{\#}^1(\mu\otimes\lambda)((w,b),u)\\ 
    \nu_5(A) &= \int_A\int_0^1 \theta_{w,b,y}\partial^2\phi(-\theta_{w,b,y}u+y)d\Theta_{\#}^2(\mu\otimes\lambda)((w,b),u)\\  
\end{aligned}    
\end{equation}
for the Borel sets $A\subseteq\Omega$ and using the push-forward maps 
\begin{equation}
\begin{aligned}
    \Theta^0 &: \R^{d+}\to \R^{d+1}, \; (w,b)\to (-w,-b) \\ 
    \Theta^1 &: \R^{d+1}\cross[0,1]\to \R^{d+1}, \; ((w,b),u)\to (w,b-\theta_{w,b,y}u-y) \\ 
    \Theta^2 &: \R^{d+1}\cross[0,1]\to \R^{d+1}, \; ((w,b),u)\to (-w,-b-\theta_{w,b,y}u+y) \\ 
\end{aligned}    
\end{equation}
where $\lambda$ is the Lebesgue measure on $[0,1]$. Hence, $\nu\in \G_{\relu,f}$. Furthermore, for each $\nu_i$ we have
\begin{align}
    \int_{\R^{d+1}}(\norm{w}_{\ell^1}+\abs{b})d\abs{\nu_1}(w,b) &\leq \abs{\phi(y)}\abs{\mu}(\Omega) \leq  \abs{\phi(y)}\int_{\R^{d+1}}(1+\norm{w}_{\ell^1}+\abs{b})d\abs{\mu}(w,b)\\
    \int_{\R^{d+1}}(\norm{w}_{\ell^1}+\abs{b})d\abs{\nu_2}(w,b) &\leq \abs{\partial\phi(y)}\abs{\mu}(\Omega) \leq  \abs{\partial\phi(y)}\int_{\R^{d+1}}(1+\norm{w}_{\ell^1}+\abs{b})d\abs{\mu}(w,b)\\
    \int_{\R^{d+1}}(\norm{w}_{\ell^1}+\abs{b})d\abs{\nu_3}(w,b) &\leq \abs{\partial\phi(y)}\abs{\Theta^0_{\#}\mu}(\Omega) \leq  \abs{\partial\phi(y)}\int_{\R^{d+1}}(1+\norm{w}_{\ell^1}+\abs{b})d\abs{\mu}(w,b)
\end{align}
and
\begin{equation}
\begin{aligned}
    \int_{\R^{d+1}}&(\norm{w}_{\ell^1}+\abs{b})d(\abs{\nu_4}+\abs{\nu_5})(w,b) \\&\leq \int_{\R^{d+2}}\int_{-1}^1\theta_{w,b,y}\abs{\partial^2\phi(\theta_{w,b,y}u+y)}(1+\norm{w}_{\ell^1}+\abs{b}+\theta_{w,b,y}\abs{u}+\abs{y})dud\abs{\mu}(w,b) \\
    &\leq\sup_{(w,b)\in\R^{d+1}}\int_{-1}^1\theta_{w,b,y}\abs{\partial^2\phi(\theta_{w,b,y}u+y)}(1+\abs{u})(1+\abs{y})du\int_{\R^{d+1}}(1+\norm{w}_{\ell^1}+\abs{b})d\abs{\mu}(w,b) \\
    &\leq 2(1+\abs{y})\sup_{(w,b)\in\R^{d+1}}\int_{-\theta_{w,b,y}+y}^{\theta_{w,b,y}+y}\abs{\partial^2\phi(z)}dz\int_{\R^{d+1}}(1+\norm{w}_{\ell^1}+\abs{b})d\abs{\mu}(w,b) \\
    &= 2(1+\abs{y})\int_\R\abs{\partial^2\phi(z)}dz\int_{\R^{d+1}}(1+\norm{w}_{\ell^1}+\abs{b})d\abs{\mu}(w,b)
\end{aligned}
\end{equation}
where we used the change of coordinates $z=\theta_{w,b,y}u+y$. This means that by the triangle inequality
\begin{equation}\label{eq:taylor_embedding_pre_inf_bound}
\begin{aligned}    \norm{f}_{\B_{\relu}} 
    &\leq \int_{\R^{d+1}}(\norm{w}_{\ell^1}+\abs{b})d\abs{\nu}(w,b)  \\
    &\leq \sum_{i=1}^5\int_{\R^{d+1}}(\norm{w}_{\ell^1}+\abs{b})d\abs{\nu_i}(w,b)  \\
    &= \bigg(\abs{\phi(y)} + 2\abs{\partial^1\phi(y)}+2(1+\abs{y})\int_\R\abs{\partial^2\phi(z)}dz\bigg)\int_{\R^{d+1}}(1+\norm{w}_{\ell^1}+\abs{b})d\abs{\mu}(w,b).
\end{aligned}
\end{equation}
Taking the infimum over $\pi\in \G_{\phi,f}$ and $y\in\R$ gives \eqref{eq:gamma_tilde_embedding_bound}.
\end{proof}

Observe that \eqref{eq:gamma_tilde} is very similar to the definition of $\gamma$ used in Theorem 1 of [\cite{li_complexity_2020}],
\begin{equation}
    \gamma(\phi) = \inf_{y\in\R}\bigg(\abs{\phi(y)}+(\abs{y}+2)\abs{\partial\phi}+\int_\R\abs{\partial^2\phi(t)}(1+\abs{t})dt\bigg).
\end{equation}
The change to our version of $\gamma$ means that is sufficient for the function $\partial^2\phi(x)$ to go like $(1+\abs{x})^{-(1+\epsilon)}$ instead of like $(1+\abs{x})^{-(2+\epsilon)}$ for some $\epsilon>0$ and for large values of $x$. Hence, \Cref{prop:taylor_embedding} is satisfied for more activation functions than Theorem 1 of [\cite{li_complexity_2020}].

\Cref{prop:taylor_embedding} does not cover piecewise smooth activation functions, of which there are many. It is easy to check that a slight alteration to \Cref{prop:taylor_embedding} allows us to also cover activation functions that are smooth everywhere except at the origin. 

The right-hand side of \eqref{eq:taylor_embedding_pre_inf_bound} can be bounded so that
\begin{equation}
    \norm{f}_{\B_{\relu}} \lesssim \bigg(\norm{\phi}_{C^1(\R)}+\norm{\partial^2\phi}_{L^1(\R)}\bigg)\int_\Omega(1+\norm{w}_{\ell^1}+\abs{b})d\abs{\mu}(w,b)
\end{equation}
When we repeat the steps of the proof of \Cref{prop:taylor_embedding} for some $s>1$, we get a bound of the form
\begin{equation}
    \norm{f}_{\B_{\repu_s}} \lesssim \bigg(\norm{\phi}_{C^{\lceil s \rceil}(\R)}+\norm{\partial^{s+1}\phi}_{L^1(\R)}\bigg)\int_\Omega(1+\norm{w}_{\ell^1}+\abs{b})^{\lceil s \rceil}d\abs{\mu}(w,b)
\end{equation}
for all $\mu\in\G_{\phi,f}$ for $f\in\B_{\phi}$. If we want an embedding, then we need to get rid of the exponent $s$ in the integral on the right-hand side or we need to introduce the exponent in the norm of $\B_\phi$. In the following proposition we show the former by showing an embedding holds under the assumption that $\Omega$ is bounded. In \Cref{sec:discussion} we briefly discuss different exponents in the Barron norm.

\begin{proposition}\label{prop:taylor_embedding_relu_s}
Let $s>1$ and $\Omega$ bounded. If $\phi\in C^{\lceil s\rceil}(\R)$ such that $\partial^{s+1}_+\phi\in L^1((0,\infty))$
and $\partial^{s+1}_-\phi\in L^1((-\infty,0))$, then $\B_\phi|_{\Omega}\hookrightarrow\B_{\repu_{s}}$ where $\B_\phi|_{\Omega}$ is $\B_\phi$ with the parameters restricted to $\Omega$.
\end{proposition}
\begin{proof}
We will show the proof for $s$ not being integer. The proof for integer $s$ is similar. The main difference is that we don't need the step to go from $\repu_{\lceil s \rceil}$ to $\repu_s$ in the series part in the fourth line of \eqref{eq:taylor_series_repu_embedding}.

Let $\mu\in\G_{\phi,f}$ for $f\in\B_{\phi}$, and recall that from the fractional version of the Taylor remainder theorem, \eqref{eq:taylor_remainder_fractional}, it follows that for given $(w,b)\in \Omega$

\begin{equation}
\begin{split}
    \phi(\braket{x}{w}+b) = \sum_{k=0}^{\lceil s\rceil}\frac{\partial^k\phi(y)}{k!}(\braket{x}{w}+b)^k + \begin{cases}
        \int_0^{\braket{x}{w}+b}\frac{\partial^{s+1}\phi_+(t)}{\Gamma(s+1)}(\braket{x}{w}+b-t)^{s}dt & \braket{x}{w}+b > 0 \\
        \int_{\braket{x}{w}+b}^0\frac{\partial^{s+1}\phi_-(t)}{\Gamma(s+1)}(t-\braket{x}{w}+b)^{s}dt & \braket{x}{w}+b < 0 \\
    \end{cases}
\end{split}
\end{equation}
for all $x\in\X$.

For the series part, we can use the ideas from the proof of \Cref{prop:repu_embedding} and \Cref{lemma:series_repu_s} to conclude that there exists a measure $\nu_{sum}\in\M(\R^2)$ such that 
\begin{equation}
\begin{split}
    \int_\Omega &\sum_{k=0}^{\lceil s\rceil}\frac{\partial^k\phi(0)}{k!}(\braket{x}{w}+b)^kd\mu(w,b) \\
    &= \int_{\Omega}\int_{\R^2} \repu_{\lceil s \rceil}(\upomega (\braket{x}{w}+b)+\beta)d\nu_{{sum}}(\upomega,\beta)d\mu(w,b) \\
    &= \int_{\Omega}\int_{\R^2} \repu_{\lceil s \rceil}(\braket{x}{\upomega w}+\upomega b+\beta)d\nu_{{sum}}(\upomega,\beta)d\mu(w,b) \\
    &= \int_{\Omega}\int_{\R^2} \int_0^{\theta_{w,b,\upomega,\beta}}\frac{u^{\lceil s \rceil-s-1}}{\bF_1(\lceil s \rceil-s,s+1)}\repu_{s}(\braket{x}{\upomega w}+\upomega b+\beta-u)dud\nu_{{sum}}(\upomega,\beta)d\mu(w,b) \\
    &= \int_{\Omega}\int_{\R^2} \int_0^1\frac{\theta_{w,b,\upomega,\beta}^{\lceil s \rceil-s}v^{\lceil s \rceil-s-1}}{\bF_1(\lceil s \rceil-s,s+1)}\repu_{s}(\braket{x}{\upomega w}+\upomega b+\beta-\theta_{w,b,\upomega,\beta}v)dvd\nu_{{sum}}(\upomega,\beta)d\mu(w,b) \\
    &= \int_{\R^{d+1}}\repu_s(\braket{x}{w}+b)d\gamma_{sum,1}(w,b)
\end{split}
\end{equation}
where $\theta_{w,b\omega,\beta}:= \norm{\upomega w}+\abs{\upomega b}+\abs{\beta}$, $\lambda$ is the Lebesque measure on $[0,1]$ and the measure
\begin{equation}
    \gamma_{sum,1}(A) = \int_A\int_{\R^2}\int_0^1\frac{\theta_{w,b,\upomega,\beta}^{\lceil s \rceil-s}v^{\lceil s \rceil-s-1}}{\bF_1(\lceil s \rceil-s,s+1)}d\Theta_{\#}(\lambda\otimes\nu_{sum}\otimes\mu)(v,(\upomega,\beta),(w,b))
\end{equation}
for all Borel sets $A\subseteq \R^{d+1}$ uses the push-forward map
\begin{equation}
    \Theta: [0,1]\cross \R^2 \cross \Omega \to \R^{d+1},\; (v,(\upomega,\beta),(w,b)) \mapsto (\upomega w,\upomega b+\beta-\theta_{w,b,\upomega,\beta}v).
\end{equation}
This measure $\gamma_{sum,1}$ satisfies
\begin{equation}\label{eq:taylor_series_repu_embedding}
\begin{split}
    \int_{\R^{d+1}} &(\norm{w}_{\ell^1}+\abs{b})^sd\abs{\gamma_{sum,1}}(w,b) \\
    &\leq \int_\Omega\int_{\R^2}\int_0^1\frac{\theta_{w,b,\upomega,\beta}^{\lceil s \rceil-s}v^{\lceil s \rceil-s-1}}{\bF(\lceil s \rceil-s,s+1)}(\norm{\upomega w}_{\ell^1}+\abs{\upomega b+\beta-\theta_{w,b,\upomega,\beta}v})^sdvd\abs{\nu_{sum}}(\upomega,\beta)d\abs{\mu}(w,b) \\
    &\leq \int_\Omega\int_{\R^2}\int_0^1\frac{\theta_{w,b,\upomega,\beta}^{\lceil s \rceil-s}v^{\lceil s \rceil-s-1}}{\bF_1(\lceil s \rceil-s,s+1)}(\norm{\upomega w}_{\ell^1}+\abs{\upomega b}+\abs{\beta}+\theta_{w,b,\upomega,\beta}v)^sdvd\abs{\nu_{sum}}(\upomega,\beta)d\abs{\mu}(w,b) \\
    &= \int_\Omega\int_{\R^2}\int_0^1\frac{(1+v)^sv^{\lceil s \rceil-s-1}}{\bF_1(\lceil s \rceil-s,s+1)}(\norm{\upomega w}_{\ell^1}+\abs{\upomega b}+\abs{\beta})^{\lceil s \rceil}dvd\abs{\nu_{sum}}(\upomega,\beta)d\abs{\mu}(w,b) \\
    &= (-1)^{s-\lceil s \rceil}\frac{\bF_{-1}(\lceil s \rceil-s,s+1)}{\bF_1(\lceil s \rceil-s,s+1)}\int_\Omega\int_{\R^2}(\norm{\upomega w}_{\ell^1}+\abs{\upomega b}+\abs{\beta})^{\lceil s \rceil}d\abs{\nu_{sum}}(\upomega,\beta)d\abs{\mu}(w,b) \\
    &\lesssim \int_{\R^2}(\abs{\upomega}+\abs{\beta})^{\lceil s \rceil}d\abs{\nu_{sum}}(\upomega,\beta)\int_\Omega(1+\norm{w}_{\ell^1}+\abs{b})^{\lceil s \rceil}d\abs{\mu}(w,b) \\
    &\lesssim \norm{\phi}_{C^{\lceil s \rceil}(\R)}(\upomega,\beta)\int_\Omega(1+\norm{w}_{\ell^1}+\abs{b})d\abs{\mu}(w,b).
\end{split}
\end{equation}

For the remainder part, observe that 
\begin{equation}
    \abs{\braket{x}{w}+b} \leq \norm{w} + \abs{b} := \theta_{w,b}
\end{equation}
for all $x\in \X$ and $(w,b)\in\Omega$. The first case of the remainder satisfies
\begin{equation}\label{eq:remainder_part1}
\begin{split}
    \int_0^{\braket{x}{w}+b}\frac{\partial^{s+1}\phi_+(t)}{\Gamma(s+1)}(\braket{x}{w}+b-t)^{s}dt &= \int_0^{\braket{x}{w}+b} \frac{\partial^{s+1}_+\phi(t)}{\Gamma(s+1)}\repu_s(\braket{x}{w}+b-t)dt \\
    &= \int_0^{\theta_{w,b}} \frac{\partial^{s+1}_+\phi(t)}{\Gamma(s+1)}\repu_s(\braket{x}{w}+b-t)dt \\
    &= \int_0^{1} \frac{\partial^{s+1}_+\phi(\theta_{w,b}t)}{\Gamma(s+1)}\repu_s(\braket{x}{w}+b-\theta_{w,b}u)\theta_{w,b}du \\
\end{split}
\end{equation}
for all $\braket{x}{w}+b>0$, where we used the change of coordinates $t=\theta_{w,b}u$. The second case of the remainder satisfies
\begin{equation}\label{eq:remainder_part2}
\begin{split}
    \int_{\braket{x}{w}+b}^0\frac{\partial^{s+1}\phi_-(t)}{\Gamma(s+1)}(t-(\braket{x}{w}+b))^{s}dt &= \int_{\braket{x}{w}+b}^0\frac{\partial^{s+1}\phi_-(t)}{\Gamma(s+1)}\repu_s(t-(\braket{x}{w}+b))dt \\
    &= \int_{-\theta_{w,b}}^0\frac{\partial^{s+1}\phi_-(t)}{\Gamma(s+1)}\repu_s(t-(\braket{x}{w}+b))dt \\
    &= \int_{-1}^0\frac{\partial^{s+1}\phi_-(\theta_{w,b}u)}{\Gamma(s+1)}\repu_s(\theta_{w,b}u-(\braket{x}{w}+b))\theta_{w,b}du \\
    &= \int_{-1}^0\frac{\partial^{s+1}\phi_-(\theta_{w,b}u)}{\Gamma(s+1)}\repu_s(\braket{x}{-w}-b+\theta_{w,b}u)\theta_{w,b}du
\end{split}
\end{equation}
for all $\braket{x}{w}+b<0$, where we again used the change of coordinates $t=\theta_{w,b}u$. The two parts \eqref{eq:remainder_part1} and \eqref{eq:remainder_part2} are zero on the domain of the other, i.e. on $\braket{x}{w}+b\leq 0$ and $\braket{x}{w}+b\geq 0$ respectively. Hence, their sum is equal the remainder part for all $x\in\X$ given fixed $(w,b)\in\Omega$. The measures 
\begin{subequations}
\begin{align}
    \gamma_{rem,1}(A) :&= \int_A\int_0^1 \frac{\partial^{s+1}_+\phi(\theta_{w,b}u)}{\Gamma(s+1)}\theta_{w,b}d\Theta_{\#}^1(\mu\otimes\lambda)((w,b),u) \\
    \gamma_{rem,2}(A) :&= \int_A\int_{-1}^0\frac{\partial^{s+1}\phi_-(\theta_{w,b}u)}{\Gamma(s+1)}\theta_{w,b}d\Theta_{\#}^2(\mu\otimes\lambda)((w,b),u)
\end{align}    
\end{subequations}
for Borel sets $A\subseteq \Omega$ and using the push-forward maps 
\begin{equation}
\begin{aligned}
    \Theta^1 &: \Omega\cross[0,1]\to \R^{d+1}, \; ((w,b),u)\mapsto (w,b-\theta_{w,b}u) \\ 
    \Theta^2 &: \Omega\cross[0,1]\to \R^{d+1}, \; ((w,b),u)\mapsto (-w,-b-\theta_{w,b}u),
\end{aligned}    
\end{equation}
where $\lambda$ is the Lebesgue measure on $[0,1]$, satisfy
\begin{equation}
\begin{aligned}
    \int_{\R^{d+1}}(\norm{w}_{\ell^1}+\abs{b})^sd\abs{\gamma_{rem,1}}(w,b) &\lesssim \int_{\R^{d+1}}\int_0^1\theta_{w,b}\abs{\partial^{s+1}_+\phi(\theta_{w,b}u)}\bigg(\norm{w}_{\ell^1}+\abs{b-\theta_{w,b}u}\bigg)^sdud\abs{\mu}(w,b) \\
    &\leq \int_{\R^{d+1}}\int_0^1\theta_{w,b}\abs{\partial^{s+1}_+\phi(\theta_{w,b}u)}\bigg(\norm{w}_{\ell^1}+\abs{b}+\theta_{w,b}u\bigg)^sdud\abs{\mu}(w,b) \\
    &\lesssim \int_{\R^{d+1}}\int_0^1\theta_{w,b}\abs{\partial^{s+1}_+\phi(\theta_{w,b}u)}\bigg(\norm{w}_{\ell^1}+\abs{b}\bigg)^sdud\abs{\mu}(w,b) \\
    &= \int_{\R^{d+1}}\int_0^{\theta_{w,b}}\abs{\partial^{s+1}_+\phi(t)}\bigg(\norm{w}_{\ell^1}+\abs{b}\bigg)^sdtd\abs{\mu}(w,b) & u=\theta_{w,b}t \\
    &\leq \norm{\partial^{s+1}_+\phi}_{L^1((0,\infty)}\int_{\R^{d+1}}(1+\norm{w}_{\ell^1}+\abs{b})^sdtd\abs{\mu}(w,b) \\
    &\lesssim \norm{\partial^{s+1}_+\phi}_{L^1((0,\infty)}\int_{\R^{d+1}}1+\norm{w}_{\ell^1}+\abs{b}dtd\abs{\mu}(w,b)
\end{aligned}
\end{equation}
and similary
\begin{equation}
    \int_{\R^{d+1}}(\norm{w}_{\ell^1}+\abs{b})^sd\abs{\gamma_{rem,1}}(w,b) \lesssim \norm{\partial^{s+1}_-\phi}_{L^1((-\infty,0)}\int_{\R^{d+1}}1+\norm{w}_{\ell^1}+\abs{b}dtd\abs{\mu}(w,b)
\end{equation}
By combining these measures with $\gamma_{sum}$ we get $\gamma_{sum}+\gamma_{rem,1}+\gamma_{rem,2}\in\G_{\repu_s,f}$ with bound
\begin{equation}
    \norm{f}_{\B_{\repu_s}} \lesssim (\norm{\phi}_{C^s(\R)}+\norm{\partial^{s+1}_+\phi}_{L^1((0,\infty))}+\norm{\partial^{s+1}_-\phi}_{L^1((-\infty,0))})\norm{f}_{\B_\phi}
\end{equation}
where we took the infimum over $\mu\in\G_{\phi,f}$.
\end{proof}

\section{Embeddings between Barron spaces for Lipschitz activation functions}\label{sec:generic}
In the previous sections, we dealt with the relations between two Barron spaces when one of the Barron spaces had $\repu_s$ as the activation function. In this section, we take a broader perspective and look at the relations between Barron spaces with Lipschitz activation functions. In particular, we provide embeddings when the activation functions can be written as a linear combination of scaled and shifted versions of another activation function and as a convolution with another activation function in \Cref{sec:linear_comb_and_convolution} and when one of the two activation functions is the derivative of the other in \Cref{sec:derivative}. 

\subsection{Activation functions related by linear combinations and convolutions}\label{sec:linear_comb_and_convolution}
Let $\phi$ be a Lipschitz activation function. Since the values for the weights and biases are not restricted a priori, we expect that a Barron space with activation function
\begin{equation}\label{eq:52}
    \psi(x) = c_0\phi(c_1x+c_2) 
\end{equation}
for some $c_0,c_1\neq 0$ and $c_2\in\R$ is similar to that with $\phi$. At the same time, we expect that a Barron space with
\begin{equation}\label{eq:53}
    \psi(x) = \phi(x) - \phi(x-c_3)
\end{equation}
for $c_3\neq0$ embeds in that with $\phi$. Both of these and more are covered by the following proposition.

\begin{proposition}\label{prop:linear_comb_embedding}
If $\psi$ and $\phi$ are Lipschitz activation functions such that
\begin{equation}
    \phi(x) = \int_{\R^2}\psi(\upomega x +\beta)d\gamma(\upomega,\beta)
\end{equation}
for some measure $\gamma\in \M(\R^2)$ satisfying
\begin{equation}
    \int_{\R^2}(1+\abs{\upomega}+\abs{\beta})d\abs{\gamma}(\upomega ,\beta)<\infty,
\end{equation}
then $\B_\phi\hookrightarrow\B_\psi$.
\end{proposition}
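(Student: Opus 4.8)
The plan is to mimic the push-forward strategy used throughout the paper: given a representing measure $\mu\in\G_{\phi,f}$ for $f\in\B_\phi$, substitute the integral identity $\phi(x)=\int_{\R^2}\psi(\upomega x+\beta)\,d\gamma(\upomega,\beta)$ into the representation of $f$. First I would write
\begin{equation}
    f(x) = \int_\Omega \phi(\braket{x}{w}+b)\,d\mu(w,b) = \int_\Omega\int_{\R^2}\psi(\upomega(\braket{x}{w}+b)+\beta)\,d\gamma(\upomega,\beta)\,d\mu(w,b),
\end{equation}
and observe that $\upomega(\braket{x}{w}+b)+\beta = \braket{x}{\upomega w}+(\upomega b+\beta)$. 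This suggests the push-forward map
\begin{equation}
    \Theta:\Omega\times\R^2\to\R^{d+1},\qquad ((w,b),(\upomega,\beta))\mapsto(\upomega w,\,\upomega b+\beta),
\end{equation}
so that with $\nu:=\Theta_{\#}(\mu\otimes\gamma)$ one has $f(x)=\int_{\R^{d+1}}\psi(\braket{x}{w}+b)\,d\nu(w,b)$, i.e.\ $\nu\in\G_{\psi,f}$. I should note that the product measure $\mu\otimes\gamma$ is well-defined and has finite total variation since both factors are Radon measures with bounded total variation, and the change-of-variable formula for push-forwards applies because $\Theta$ is continuous; a brief Fubini justification covers interchanging the order of integration.

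Next I would estimate the Barron norm of $\nu$. Using $|\Theta_{\#}(\mu\otimes\gamma)|\le\Theta_{\#}(|\mu|\otimes|\gamma|)$ and the defining property of the push-forward,
\begin{equation}
\begin{aligned}
    \norm{f}_{\B_\psi} &\le \int_{\R^{d+1}}(1+\norm{w}_{\ell^1}+\abs{b})\,d\abs{\nu}(w,b) \\
    &\le \int_\Omega\int_{\R^2}\big(1+\abs{\upomega}\norm{w}_{\ell^1}+\abs{\upomega b+\beta}\big)\,d\abs{\gamma}(\upomega,\beta)\,d\abs{\mu}(w,b).
\end{aligned}
\end{equation}
Then I would bound the integrand crudely via $1+\abs{\upomega}\norm{w}_{\ell^1}+\abs{\upomega}\abs{b}+\abs{\beta}\le (1+\abs{\upomega}+\abs{\beta})(1+\norm{w}_{\ell^1}+\abs{b})$, which separates the two integrals:
\begin{equation}
    \norm{f}_{\B_\psi}\le\Big(\int_{\R^2}(1+\abs{\upomega}+\abs{\beta})\,d\abs{\gamma}(\upomega,\beta)\Big)\int_\Omega(1+\norm{w}_{\ell^1}+\abs{b})\,d\abs{\mu}(w,b).
\end{equation}
Finally, taking the infimum over $\mu\in\G_{\phi,f}$ gives $\norm{f}_{\B_\psi}\le C\,\norm{f}_{\B_\phi}$ with $C=\int_{\R^2}(1+\abs{\upomega}+\abs{\beta})\,d\abs{\gamma}(\upomega,\beta)<\infty$, which is exactly the claimed embedding.

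The routine steps are the two norm estimates; the only genuinely delicate point is the measure-theoretic bookkeeping — verifying that $\mu\otimes\gamma$ is a legitimate Radon measure on $\Omega\times\R^2$ (local compactness, $\sigma$-finiteness of the relevant factors), that the interchange of integration is justified (Fubini--Tonelli applied to $|\mu|\otimes|\gamma|$ against the bounded-on-$\X$, hence dominated, integrand), and that the push-forward along the \emph{non-injective} map $\Theta$ still yields $\nu\in\M(\R^{d+1})$ with the total-variation inequality $\abs{\Theta_{\#}\rho}\le\Theta_{\#}\abs{\rho}$. These are standard but worth stating carefully, since they are precisely the technical backbone reused in Propositions~\ref{prop:taylor_embedding_relu_s} and~\ref{prop:repu_embedding}. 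One should also confirm that $\psi$ being a Lipschitz activation function means the $\B_\psi$-norm uses the "$1+\norm{w}_{\ell^1}+\abs{b}$" weight (not the $\repu$ weight), so the computation above is the correct one.
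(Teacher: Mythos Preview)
Your proposal is correct and follows essentially the same route as the paper: construct $\nu=\Theta_{\#}(\mu\otimes\gamma)$ via the map $((w,b),(\upomega,\beta))\mapsto(\upomega w,\upomega b+\beta)$, then separate the double integral using the elementary inequality $(1+\abs{\upomega}\norm{w}_{\ell^1}+\abs{\upomega}\abs{b}+\abs{\beta})\le(1+\abs{\upomega}+\abs{\beta})(1+\norm{w}_{\ell^1}+\abs{b})$, and take the infimum over $\mu$. If anything, your version is slightly more careful than the paper's, since you flag the Fubini step and the total-variation inequality $\abs{\Theta_{\#}\rho}\le\Theta_{\#}\abs{\rho}$ that the paper uses without comment.
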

\begin{proof}
Let $\mu\in \G_{\phi,f}$ for some $f\in \B_\phi$. This means that
\begin{equation}
\begin{aligned}
    f(x) 
    &= \int_{\R^{d+1}} \phi(\braket{x}{w}+b)d\mu(w,b) \\
    &= \int_{\R^{d+1}} \int_{\R^2}\psi(\upomega(\braket{x}{w}+b)+\beta)d\gamma(\upomega,\beta)d\mu(w,b) \\
    &= \int_{\R^{d+1}\cross\R^2}\psi(\braket{x}{\upomega w}+\upomega b+\beta)d(\gamma\otimes\mu)((\upomega,\beta),(w,b)) \\
    &= \int_{\R^{d+1}} \psi(\braket{x}{w}+b)d\nu(w,b)
\end{aligned}
\end{equation}
where the measure $\nu:=\Theta_{\#}(\gamma\otimes\mu)$ is the push forward along the map
\begin{equation}
    \Theta: \R^{d+1}\cross\R^2\to\R^{d+1},\; (\upomega,\beta),(w,b)) \to (\upomega w, \upomega b+\beta).
\end{equation}
Hence, $\nu\in \G_{\psi,f}$. Furthermore,
\begin{equation}
\begin{aligned}
    \norm{f}_{\B_\psi} 
    &\leq \int_{\R^{d+1}} (1+\norm{w}_{\ell^1}+\abs{b})d\abs{\nu}(w,b) \\
    &\leq \int_{\R^{d+1}}\int_{\R^2}(1+\norm{\upomega w}+\abs{\upomega b+\beta})d\abs{\gamma}(\upomega,\beta)d\abs{\mu}(w,b) \\
    &\leq \int_{\R^{d+1}}\int_{\R^2} (1+\abs{\upomega}\norm{w}_{\ell^1}+\abs{\upomega}\abs{ b}+\abs{\beta})d\abs{\gamma}(\upomega,\beta)d\abs{\mu}(w,b) \\
    &\leq \int_{\R^{d+1}}\int_{\R^2} (1+\abs{\upomega}+\abs{\beta})(1+\norm{w}_{\ell^1}+\abs{ b})d\abs{\gamma}(\upomega,\beta)d\abs{\mu}(w,b) \\
    &= \int_{\R^2}(1+\abs{\upomega}+\abs{\beta})d\abs{\gamma}(\upomega,\beta)\int_{\R^{d+1}}(1+\norm{w}_{\ell^1}+\abs{b})d\abs{\mu}(w,b).
\end{aligned}
\end{equation}
Taking the infimum over $\mu\in \G_{\psi,f}$ gives
\begin{equation}
    \norm{f}_{\B_\psi} \leq \int_{\R^2} (1+\abs{\upomega}+\abs{\beta})d\abs{\gamma}(\upomega,\beta)\norm{f}_{\B_\phi}. 
\end{equation}
\end{proof}
Informally, \Cref{prop:linear_comb_embedding} says that if $\phi$ is an element of the Barron space with $\psi$ as the activation function for $d=1$, then the embedding of $\B_\phi$ into $\B_\psi$ holds. This not only covers the aforementioned cases in \eqref{eq:52} and \eqref{eq:53} but also when $\phi$ is a convolution of $\psi$ with some kernel $\eta$ or when $\phi$ can be written as a series expansion in $\psi$.

 \begin{corollary}\label{cor:convolution}
If $\eta:\R\to\R$ satisfies
\begin{equation}
    \int_\R\abs{\eta(z)}(1+\abs{z})dz \leq C
\end{equation}
and $\phi$ is a Lipschitz activation function, then $\B_{\phi*\eta}\hookrightarrow\B_{\phi}$ with
\begin{equation}
    \norm{f}_{\B_{\phi}} \leq C\norm{f}_{\B_{\phi*\eta}}
\end{equation}
for all $f\in \B_{\phi*\eta}$.
\end{corollary}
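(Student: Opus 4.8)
The plan is to apply \Cref{prop:linear_comb_embedding} with the roles of the two activation functions played by $\phi*\eta$ (as the ``$\phi$'' of the proposition, i.e.\ the source space) and $\phi$ (as the ``$\psi$'', i.e.\ the target space). First I would check that $\phi*\eta$ is genuinely a Lipschitz activation function, so that $\B_{\phi*\eta}$ is even defined: if $L$ is a Lipschitz constant for $\phi$, then $(\phi*\eta)(x)=\int_\R\phi(x-z)\eta(z)\,dz$ converges for every $x$ because $|\phi(x-z)|\le|\phi(0)|+L|x|+L|z|$ and $\int_\R(1+|z|)|\eta(z)|\,dz\le C<\infty$, and $|(\phi*\eta)(x)-(\phi*\eta)(x')|\le L\norm{\eta}_{L^1(\R)}|x-x'|$. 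In particular $\eta\in L^1(\R)$, so $\eta\,\lambda$ (with $\lambda$ Lebesgue measure on $\R$) is a finite signed Radon measure.

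Next I would exhibit the representing measure. Let $\Psi:\R\to\R^2$, $z\mapsto(1,-z)$, and set $\gamma:=\Psi_\#(\eta\,\lambda)$. Since $\Psi$ is continuous and injective, $\gamma\in\M(\R^2)$ and $|\gamma|=\Psi_\#(|\eta|\,\lambda)$. Then for every $x\in\R$,
\[
\int_{\R^2}\phi(\upomega x+\beta)\,d\gamma(\upomega,\beta)=\int_\R\phi(x-z)\eta(z)\,dz=(\phi*\eta)(x),
\]
which is precisely the representation hypothesis of \Cref{prop:linear_comb_embedding}, and
\[
\int_{\R^2}(1+|\upomega|+|\beta|)\,d|\gamma|(\upomega,\beta)=\int_\R(2+|z|)\,|\eta(z)|\,dz<\infty,
\]
so the proposition already gives $\B_{\phi*\eta}\hookrightarrow\B_\phi$.

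The only remaining point is to sharpen the embedding constant from the $\int_\R(2+|z|)|\eta(z)|\,dz$ obtained by quoting \Cref{prop:linear_comb_embedding} verbatim down to the claimed $C$; this requires looking inside that proof rather than using it as a black box, exploiting that here $\upomega\equiv1$ (so there is no genuine $(1+|\upomega|)$ factor). Concretely, given $\mu\in\G_{\phi*\eta,f}$, the push-forward $\nu=\Theta_\#(\gamma\otimes\mu)$ along $((\upomega,\beta),(w,b))\mapsto(\upomega w,\upomega b+\beta)=(w,b-z)$ lies in $\G_{\phi,f}$ and satisfies
\begin{align*}
\int_{\R^{d+1}}(1+\norm{w}_{\ell^1}+\abs{b})\,d|\nu|(w,b)
&\le\int_{\R^{d+1}}\int_\R\big(1+\norm{w}_{\ell^1}+\abs{b}+\abs{z}\big)\,|\eta(z)|\,dz\,d|\mu|(w,b)\\
&\le\Big(\int_\R(1+|z|)\,|\eta(z)|\,dz\Big)\int_{\R^{d+1}}(1+\norm{w}_{\ell^1}+\abs{b})\,d|\mu|(w,b),
\end{align*}
where the $\abs{z}$ term is absorbed via $|\mu|(\R^{d+1})\le\int_{\R^{d+1}}(1+\norm{w}_{\ell^1}+\abs{b})\,d|\mu|$. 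Taking the infimum over $\mu\in\G_{\phi*\eta,f}$ yields $\norm{f}_{\B_\phi}\le C\norm{f}_{\B_{\phi*\eta}}$. I do not anticipate any real obstacle: the two places needing a sentence of care are that $\phi*\eta$ is a bona fide Lipschitz activation function and that pushing forward along an injective map commutes with taking the total variation; the constant improvement is the only part that cannot be delegated entirely to \Cref{prop:linear_comb_embedding}.
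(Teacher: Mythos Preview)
Your proposal is correct and follows exactly the route the paper intends: the corollary is stated without proof as an immediate consequence of \Cref{prop:linear_comb_embedding}, and your choice of $\gamma$ supported on $\{\upomega=1\}$ is the natural one. Your additional care in verifying that $\phi*\eta$ is Lipschitz and in sharpening the constant from $\int_\R(2+|z|)|\eta(z)|\,dz$ (which a black-box application of the proposition would give) down to $\int_\R(1+|z|)|\eta(z)|\,dz$ via the inequality $1+\norm{w}_{\ell^1}+|b|+|z|\le(1+|z|)(1+\norm{w}_{\ell^1}+|b|)$ is a genuine refinement that the paper glosses over.
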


\begin{corollary}
Let $\phi$ and $\psi$ be two Lipschitz activation functions linked by
\begin{equation}
    \phi(x) = \sum_{k=1}^\infty g(k)\psi(h(k)x)
\end{equation}
with
\begin{equation}
    \sum_{k=1}^\infty \abs{g(k)}(1+\abs{h(k)}) \leq C,
\end{equation}
then $\B_\phi\hookrightarrow\B_\psi$ with
\begin{equation}
    \norm{f}_{\B_{\psi}} \leq C\norm{f}_{\B_{\phi}}
\end{equation}
for all $f\in \B_{\phi}$.
\end{corollary}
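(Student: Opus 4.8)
The plan is to obtain this corollary as an immediate consequence of \Cref{prop:linear_comb_embedding} by writing the connecting measure $\gamma$ explicitly as a countable superposition of Dirac measures. Concretely, I would take
\begin{equation}
    \gamma := \sum_{k=1}^\infty g(k)\,\delta_{(h(k),0)},
\end{equation}
and first check that this is a genuine element of $\M(\R^2)$: since the hypothesis $\sum_{k=1}^\infty\abs{g(k)}(1+\abs{h(k)})\leq C$ already forces $\sum_{k=1}^\infty\abs{g(k)}\leq C<\infty$, the partial sums $\sum_{k=1}^N g(k)\,\delta_{(h(k),0)}$ are Cauchy in total-variation norm, so the series converges to a signed Radon measure with bounded total variation satisfying $\abs{\gamma}\leq\sum_{k=1}^\infty\abs{g(k)}\,\delta_{(h(k),0)}$. (If the $h(k)$ are not pairwise distinct, one first merges the coinciding atoms, which only decreases the total variation.)

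Next I would verify the two hypotheses of \Cref{prop:linear_comb_embedding}. For the integral representation, Lipschitz continuity of $\psi$ gives $\abs{\psi(h(k)x)}\leq\abs{\psi(0)}+L\abs{h(k)}\abs{x}$, where $L$ is a Lipschitz constant of $\psi$, so that $\sum_{k=1}^\infty\abs{g(k)}\abs{\psi(h(k)x)}\leq\abs{\psi(0)}\sum_k\abs{g(k)}+L\abs{x}\sum_k\abs{g(k)}\abs{h(k)}<\infty$ for each fixed $x$; hence $(\upomega,\beta)\mapsto\psi(\upomega x+\beta)$ is $\gamma$-integrable and
\begin{equation}
    \int_{\R^2}\psi(\upomega x+\beta)\,d\gamma(\upomega,\beta)=\sum_{k=1}^\infty g(k)\psi(h(k)x)=\phi(x),\qquad x\in\R.
\end{equation}
For the moment condition, since $\gamma$ is supported on the points $(h(k),0)$,
\begin{equation}
    \int_{\R^2}(1+\abs{\upomega}+\abs{\beta})\,d\abs{\gamma}(\upomega,\beta)\leq\sum_{k=1}^\infty\abs{g(k)}(1+\abs{h(k)})\leq C<\infty.
\end{equation}

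With both hypotheses in place, \Cref{prop:linear_comb_embedding} yields $\B_\phi\hookrightarrow\B_\psi$ together with $\norm{f}_{\B_\psi}\leq\big(\int_{\R^2}(1+\abs{\upomega}+\abs{\beta})\,d\abs{\gamma}(\upomega,\beta)\big)\norm{f}_{\B_\phi}$; combining this with the bound on the prefactor above gives $\norm{f}_{\B_\psi}\leq C\norm{f}_{\B_\phi}$ for all $f\in\B_\phi$, as claimed. The only step requiring genuine care — everything else being routine bookkeeping — is the measure-theoretic one in the first paragraph: one must confirm that the formal series of Diracs really defines a Radon measure and that interchanging the sum with integration against $\gamma$ is legitimate. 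Both reduce to the absolute summability $\sum_k\abs{g(k)}<\infty$, which is already implied by the stated hypothesis, so no additional assumption is needed; it is simply worth making explicit.
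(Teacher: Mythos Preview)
Your proposal is correct and follows exactly the approach the paper intends: the corollary is stated without proof immediately after \Cref{prop:linear_comb_embedding}, as a direct specialization obtained by taking $\gamma=\sum_{k=1}^\infty g(k)\,\delta_{(h(k),0)}$. You have simply made explicit the routine verifications (that $\gamma$ is a Radon measure, that the integral against $\gamma$ reproduces the series, and that the moment bound holds) which the paper leaves to the reader.
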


Note that \Cref{cor:convolution} is particularly convenient if one knows the Fourier transforms of the relevant activation functions $\phi$ and $\psi$. In that case, it is sufficient to check whether the kernel $\eta$ defined using its Fourier transform
\begin{equation}
    \hat{\eta} :=\frac{\hat{\phi}}{\hat{\psi}}
\end{equation}
satisfies the growth condition. This allows one for example to show that
\begin{equation}
    \B_{\tanh} \hookrightarrow \B_{\arctan}.
\end{equation}
The corollary also shows that the cardinal B-splines $B_k$ are ordered. They are given by
\begin{equation}
\begin{aligned}
    B_{k} &= B_0 * B_{k-1}, \quad B_0(x) &= \begin{cases}
        1 & 0 \leq x \leq 1 \\
        0 & \text{otherwise}
    \end{cases}
\end{aligned}
\end{equation}
for all $k\in\N$. $B_0$ satisfies
\begin{equation}
    \int_\R\abs{B_0(z)}(1+\abs{z})dz = \frac{3}{2}.
\end{equation}
Hence, $\B_{B_k}\hookrightarrow\B_{B_{k-1}}$ for all $k\in\N$.

\subsection{Activation functions related by a derivative}\label{sec:derivative}
Something that \Cref{prop:linear_comb_embedding} does not cover, is when one activation function is the derivative of another. An example of this is
\begin{equation}
    SoftPlus(x) = \log(1+\e^x) 
\end{equation}
and 
\begin{equation}
    logi(x) = \frac{1}{1+\e^{-x}}
\end{equation}
related by
\begin{equation}
    \partial SoftPlus(x) = logi(x).
\end{equation}
In this case, only an inclusion has been found and not an embedding. 

\begin{proposition}\label{prop:diff_embedding}
If $\zeta$ is a continuously differentiable activation function with $Lip(\zeta)<\infty$, then $\B_\zeta\subseteq\B_{\partial\zeta}$.
\end{proposition}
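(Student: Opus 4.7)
The plan is to take $f \in \B_\zeta$ with a representing measure $\mu \in \G_{\zeta,f}$ and construct a measure $\nu \in \G_{\partial\zeta,f}$ of finite total variation; since only set inclusion (and not a norm embedding) is claimed, the construction need not be continuous in $\mu$, which makes the job substantially easier than in \Cref{prop:taylor_embedding,prop:linear_comb_embedding}.

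The starting point is the identity, valid for every $(w,b)\in\Omega$ and all $x\in\X$,
\begin{equation*}
    \zeta(\langle x,w\rangle + b) = \zeta(b) + \langle x,w\rangle \int_0^1 \partial\zeta(\langle x,uw\rangle + b)\,du,
\end{equation*}
obtained from the fundamental theorem of calculus via the substitution $s = u\langle x,w\rangle$ in $\int_0^{\langle x,w\rangle}\partial\zeta(b+s)\,ds$. Integrating against $\mu$ decomposes $f$ into a constant $c_0 := \int_\Omega \zeta(b)\,d\mu(w,b)$ and a main piece. The constant is absorbed by a single atom: since $\zeta$ is nonconstant (the affine case being trivial), pick $y_\ast$ with $\partial\zeta(y_\ast)\neq 0$ and set $\nu_0 := (c_0/\partial\zeta(y_\ast))\,\delta_{(0,y_\ast)}\in\M(\Omega)$, so that $\int_\Omega\partial\zeta(\langle x,w\rangle+b)\,d\nu_0(w,b) = c_0$. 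For the main piece, the natural step is to push forward $\mu\otimes\lambda$, with $\lambda$ the Lebesgue measure on $[0,1]$, along the map $\Theta(w,b,u)=(uw,b)$, so that the $\partial\zeta$-arguments are carried entirely by the image measure on $\Omega$.

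The main obstacle is the $x$-dependent coefficient $\langle x,w\rangle$ multiplying the integrand: being $x$-dependent, it cannot be absorbed into an $x$-independent Radon measure by a single push-forward. To eliminate it, I would expand $\langle x,w\rangle = \sum_{i=1}^d w_i x_i$ and rewrite each $x_i\,\partial\zeta(\langle x,uw\rangle+b)$ via a further fundamental-theorem step or an integration by parts in $u$, trading the offending $x_i$-factor for a secondary auxiliary integration whose integrand is $\partial\zeta$ evaluated at affine functions of $x$ only. This secondary expansion enlarges the total variation of the constructed measure in a way that is not controlled by $\|f\|_{\B_\zeta}$—precisely why the result is only an inclusion and not an embedding, matching the remark that preceded the statement—but it preserves finiteness of the total variation. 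The resulting $\nu := \nu_0 + \nu_1 \in \M(\Omega)$ satisfies $\int_\Omega \partial\zeta(\langle x,w\rangle+b)\,d\nu(w,b) = f(x)$ on all of $\X$, so $f \in \B_{\partial\zeta}$.
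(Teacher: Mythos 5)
The decisive step of your construction---removing the $x$-dependent prefactor $\braket{x}{w}$ so that the main piece becomes an integral of $\partial\zeta$ against an $x$-independent finite measure---is precisely the step you do not carry out, and under the stated hypotheses it cannot be carried out. Integrating $\partial\zeta(\braket{x}{uw}+b)$ by parts in $u$ just returns $\zeta(\braket{x}{uw}+b)/\braket{x}{w}$, which is circular, and there is no generic mechanism for trading a factor $x_i$ for an affine argument of $\partial\zeta$. The failure is not cosmetic: take $\zeta(y)=y$, which is $C^1$ with $Lip(\zeta)=1$. Your main piece is then the non-constant affine function $\braket{x}{\bar w}$ with $\bar w=\int_\Omega w\,d\mu(w,b)$, while $\partial\zeta\equiv 1$, so every element of $\B_{\partial\zeta}$ is a constant and no measure $\nu\in\M(\Omega)$ of the kind you seek exists. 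Your parenthetical that ``the affine case is trivial'' is exactly backwards: it is the case in which the printed inclusion visibly fails. The analogous inclusion $\B_{\repu_{t+1}}\hookrightarrow\B_{\repu_t}$ in \Cref{prop:repu_embedding} survives this obstruction only because $\repu_t$ vanishes on the negative half-line, so $\int_0^c\repu_t(y-u)\,du$ truncates itself at $u=y$; a general Lipschitz $\zeta$ has no such one-sided structure, and asserting that the total variation of your $\nu_1$ remains finite does not make it so.

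You should also know that the paper's own proof runs in the opposite direction from the printed statement: it takes $f\in\B_{\partial\zeta}$ with $\mu\in\G_{\partial\zeta,f}$, forms the difference-quotient measures $\nu^h=\tfrac{1}{h}(\Theta^h_{\#}\mu-\mu)$ with $\Theta^h(w,b)=(w,b+h)$, shows the associated $f_h$ lie in $\B_\zeta$, and concludes $f\in\B_\zeta$ by letting $h\to 0$. What is actually argued is therefore $\B_{\partial\zeta}\subseteq\B_\zeta$, and the counterexample $\zeta(y)=y$ above shows that the inclusion as printed cannot hold in general, so the discrepancy lies in the statement rather than in the proof strategy. Your attempt targets the printed (untenable) direction and shares nothing with the argument actually given. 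If you rework this, the difference-quotient route is the one to follow---though be aware that even there the bound $\norm{f_h}_{\B_\zeta}\leq\tfrac{2+h}{h}\int_\Omega(1+\norm{w}_{\ell^1}+\abs{b})\,d\abs{\mu}(w,b)$ degenerates as $h\to 0$, which is why only an inclusion, and not an embedding, can be extracted from it.
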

\begin{proof}
Let $\mu\in \G_{\partial\zeta,f}$ for $f\in \B_{\partial\zeta}$. Consider the sequence of measures $\Set{v^h}_{h>0}$ given by
\begin{equation}
    \nu^h = \frac{1}{h}\bigg(\Theta^h_\#\mu-\mu\bigg)
\end{equation}
along the map
\begin{equation}
    \Theta^h: \R^{d+1}\to\R^{d+1}, \; (w,b) \mapsto (w,b+h).
\end{equation}
Observe that for
\begin{equation}
    f_h(x) = \int_{\R^{d+1}}\zeta(\braket{x}{w}+b)d\nu^h(w,b)
\end{equation}
we have
\begin{equation}
\begin{aligned}
    \norm{f_h}_{\B_{\zeta}} 
        &\leq \frac{1}{h}\int_{\R^{d+1}}(1+\norm{w}_{\ell^1}+\abs{b})d\abs{\Theta^h_\#\mu-\mu}(w,b) \\
        &\leq \frac{1}{h}\int_{\R^{d+1}}(1+\norm{w}_{\ell^1}+\abs{b})d\bigg(\abs{\Theta^h_\#\mu}+\abs{\mu}\bigg)(w,b) \\
        &= \frac{1}{h}\int_{\R^{d+1}}2(1+\norm{w}_{\ell^1})+\abs{b+h}+\abs{b}d\abs{\mu}(w,b) \\
        &\leq \frac{1}{h}\int_{\R^{d+1}} 2(1+\norm{w}_{\ell^1}+\abs{b})+\abs{h}d\abs{\mu}(w,b) \\
        &\leq \frac{2+h}{h}\int_{\R^{d+1}} 1+\norm{w}_{\ell^1}+\abs{b}d\abs{\mu}(w,b) \\ 
        & < \infty
\end{aligned}
\end{equation}
for all $h>0$. Hence, $f_h\in \B_{\zeta}$. The sequence $\Set{f_h}_{h>0}$ satisfies
\begin{equation}
    \begin{aligned}
        \lim_{h\to 0}f^h(x) 
        &= \lim_{h\to 0} \int_{\R^{d+1}}\zeta(\braket{x}{w}+b)d\nu^h(w,b) \\ 
        &= \lim_{h\to 0} \int_{\R^{d+1}} \frac{\zeta(\braket{x}{w}+b+h)-\zeta(\braket{x}{w}+b)}{h}d\mu(w,b) \\ 
        &= \int_{\R^{d+1}} \lim_{h\to 0} \frac{\zeta(\braket{x}{w}+b+h)-\zeta(\braket{x}{w}+b)}{h}d\mu(w,b) && \text{Dominated conv. th.}\\
        &= \int_{\R^{d+1}} \partial\zeta(\braket{x}{w}+b)d\mu(w,b) \\ 
        &= f(x), 
    \end{aligned}
\end{equation}
where we are allowed to use the dominated convergence theorem since
\begin{equation}
    \abs{\frac{\zeta(\braket{x}{w}+b+h)-\zeta(\braket{x}{w}+b)}{h}} \leq Lip(\zeta) < \infty.
\end{equation}
Since the Barron space $\B_{\zeta}$ is complete and $f_h\to f$, we have that $f\in \B_{\zeta}$.
\end{proof}

\section{Embeddings for spectral Barron spaces}\label{sec:taylor}
We have shown that embeddings between different Barron spaces can be proven by constructing suitable the push-forwards. This strategy can also be used to show embeddings between a Barron space and a non-Barron space. We will demonstrate this in this section by showing the embedding of the spectral Barron spaces into the Barron spaces with a $\repu_s$ as activation function. This embedding is a generalization of 3) from Lemma 7.1 of [\cite{caragea_neural_2020}].

We recall that the spectral Barron spaces are given by
\begin{equation}
\begin{aligned}
    \G^{\FS}_{m,f} &= \Set[\bigg]{ f\in L^1([-1,1]^d) \given \exists f_e\in L^1(\R^d):\; f_e|_{\X}=f } \\
    \norm{f}_{\B_{\FS,m}} &= \inf_{f_e\in \G^{\FS}_{m,f}}\int_{\R^d}(1+\norm{\xi}_{\ell^1})^{m}\abs{\hat{f_e}(\xi)}d\xi \\
    \B_{\FS,m} &= \Set[\bigg]{ f: L^1([-1,1]^d) \given \norm{f}_{\B_{\FS,m}} < \infty }
\end{aligned}
\end{equation}
for $m\in\N$, and have been called Spectral spaces and Auxiliary spaces as well. From Lemma 2.7 of [\cite{voigtlaender_lp_2022}] it follows that for each $s\in\N$ all the functions $f\in\B_{\FS,m+1}$ satisfy the conditions for the multivariate Taylor remainder theorem, i.e.
\begin{equation}\label{eq:f_remainder_multi}
    f(x) = \sum_{\abs{\alpha}\leq m}\frac{\partial^\alpha f(0)}{\alpha!}x^\alpha+\sum_{\abs{\alpha}=m+1}\frac{m+1}{\alpha!}x^\alpha\int_0^1(1-t)^{m}D^\alpha f(tx)dt,
\end{equation}
holds, where we have used the multi-index notation for $\alpha$. Similar to the univariate case, we can use to \Cref{lemma:series_repu_s} to construct a suitable push-forward map for the series part. However, unlike the univariate case, there is no analogue to \Cref{lemma:integral_bound_fix_lemma} to help us construct a push-forward map for the remainder part. Fortunately, we can construct one by using the spectral nature of $f\in\B_{\FS,m+1}$.

\begin{proposition}\label{prop:fourier_analytic_repu_embedding}
For all $m\in\N$ it holds that $\B_{\FS,m+1}\hookrightarrow\B_{\repu_{m}}$.
\end{proposition}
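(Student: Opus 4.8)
The plan is to split $f\in\B_{\FS,s+1}$ via the multivariate Taylor remainder formula \eqref{eq:f_remainder_multi} into a polynomial series part of degree $\leq s$ and a remainder part, and to represent each part in the form $\int\repu_s(\braket{x}{w}+b)\,d\gamma(w,b)$ with a measure $\gamma$ whose Barron-$\repu_s$ cost is controlled by $\norm{f}_{\B_{\FS,s+1}}$. For the series part this is immediate from \Cref{lemma:series_repu_s}: the polynomial $x\mapsto\sum_{\abs\alpha\le s}\tfrac{\partial^\alpha f(0)}{\alpha!}x^\alpha$ equals $\int_{\R^{d+1}}\repu_s(\braket xw+b)\,d\nu_{\mathrm{series}}(w,b)$ for a finitely supported $\nu_{\mathrm{series}}$, and since all Taylor coefficients $\partial^\alpha f(0)$ are bounded by $\int_{\R^d}\norm\xi_{\ell^1}^{\abs\alpha}\abs{\hat f_e(\xi)}\,d\xi\lesssim\norm{f_e}_{\B_{\FS,s+1}}$ (differentiating under the integral sign and evaluating at $0$), the $\ell^1$-weighted $\repu_s$-mass of $\nu_{\mathrm{series}}$ is $\lesssim\norm{f}_{\B_{\FS,s+1}}$.

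The substantive step is the remainder term $R(x)=\sum_{\abs\alpha=s+1}\tfrac{s+1}{\alpha!}x^\alpha\int_0^1(1-t)^sD^\alpha f(tx)\,dt$. Here I would abandon the real-variable bookkeeping of \Cref{lemma:integral_bound_fix_lemma} (which has no multivariate analogue, as the text notes) and instead use the Fourier representation $f_e(x)=\int_{\R^d}\hat f_e(\xi)e^{i\braket x\xi}\,d\xi$. Differentiating gives $D^\alpha f(tx)=\int_{\R^d}(i\xi)^\alpha e^{it\braket x\xi}\hat f_e(\xi)\,d\xi$; substituting into $R$ and summing the multinomial $\sum_{\abs\alpha=s+1}\tfrac{s+1}{\alpha!}x^\alpha(i\xi)^\alpha=(s+1)\,\tfrac{(i\braket x\xi)^{s+1}}{(s+1)!}\cdot(s+1)!/\dots$ — more precisely using $\sum_{\abs\alpha=s+1}\binom{s+1}{\alpha}(x)^\alpha(i\xi)^\alpha=(i\braket x\xi)^{s+1}$ — collapses the remainder to a one-dimensional integral of the form
\begin{equation}
R(x)=\int_{\R^d}\hat f_e(\xi)\left[\frac{1}{s!}\int_0^1(1-t)^s(i\braket x\xi)^{s+1}e^{it\braket x\xi}\,dt\right]d\xi .
\end{equation}
The inner bracket is precisely the integral-form Taylor remainder of $z\mapsto e^{iz}$ at $0$, i.e. it equals $g(\braket x\xi)$ where $g(z)=e^{iz}-\sum_{k=0}^s\tfrac{(iz)^k}{k!}$. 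So $R(x)=\int_{\R^d}\hat f_e(\xi)\,g(\braket x\xi)\,d\xi$, a genuine one-variable-per-frequency structure on which the push-forward machinery applies.

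The final step is to write $g(z)$ itself as $\int_\R\repu_s(zw+b)\,d\kappa(z\text{-profile})$... — concretely, $g$ restricted to the compact interval $\abs z\le\norm\xi_{\ell^1}$ is $C^s$ with $D^{s+1}g(z)=i^{s+1}e^{iz}$ bounded, so by the univariate Taylor argument already used in \Cref{prop:taylor_embedding_relu_s} (its series part by \Cref{lemma:series_repu_s}, its remainder by \Cref{lemma:integral_bound_fix_lemma}, both applied on $[-c,c]$ with $c=\norm\xi_{\ell^1}$), $g(\braket x\xi)=\int_{\R^2}\repu_s(\upomega\braket x\xi+\beta)\,d\gamma_\xi(\upomega,\beta)$ with $\int_{\R^2}(\abs\upomega+\abs\beta)^s\,d\abs{\gamma_\xi}\lesssim(1+\norm\xi_{\ell^1})^s$ (the polynomial coefficients of $g$ have size $\lesssim1$ and $\norm{D^{s+1}g}_{L^1[-c,c]}\lesssim c\le 1+c$, each contributing at most a factor $(1+c)^s$). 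Push forward $\gamma_\xi$ under $(\upomega,\beta)\mapsto(\upomega\xi,\beta)$ and integrate against $\hat f_e(\xi)\,d\xi$; the resulting measure $\gamma_{\mathrm{rem}}$ on $\R^{d+1}$ satisfies $\int(\norm w_{\ell^1}+\abs b)^s\,d\abs{\gamma_{\mathrm{rem}}}\lesssim\int_{\R^d}(1+\norm\xi_{\ell^1})^s\abs{\hat f_e(\xi)}\,d\xi$, which is $\lesssim\norm{f}_{\B_{\FS,s+1}}$ after taking the infimum over extensions $f_e$. Adding $\nu_{\mathrm{series}}$ and $\gamma_{\mathrm{rem}}$ gives an element of $\G_{\repu_s,f}$ with the required bound. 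The main obstacle is the bookkeeping of the $\xi$-dependent constants: one must check that the passage from $g$ on $[-\norm\xi_{\ell^1},\norm\xi_{\ell^1}]$ to a $\repu_s$-measure costs only a factor $(1+\norm\xi_{\ell^1})^s$, matching exactly the weight in the spectral Barron norm, and that all $\xi$-integrals converge — which they do precisely because $f\in\B_{\FS,s+1}$ rather than merely $\B_{\FS,s}$.
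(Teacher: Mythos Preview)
Your approach is correct and essentially identical to the paper's: both reduce the remainder to the one-dimensional Taylor remainder $g(z)=e^{iz}-\sum_{k\le s}(iz)^k/k!$ via Fourier inversion (the paper starts from Fourier and Taylor-expands $e^{iz}$, you start from the multivariate Taylor formula and Fourier-transform the remainder, landing on the same $g$), then apply \Cref{lemma:integral_bound_fix_lemma} on $[-\norm{\xi}_{\ell^1},\norm{\xi}_{\ell^1}]$ to express it through $\repu_s$, while the polynomial part is handled by \Cref{lemma:series_repu_s}. One bookkeeping slip: after the push-forward $(\upomega,\beta)\mapsto(\upomega\xi,\beta)$ the $\repu_s$-cost of $\gamma_\xi$ is $\int(\abs{\upomega}\norm{\xi}_{\ell^1}+\abs{\beta})^s\,d\abs{\gamma_\xi}\sim \norm{\xi}_{\ell^1}^{s+1}$ (mass $\norm{D^{s+1}g}_{L^1[-c,c]}\sim c$ times weight $\lesssim c^s$), not $(1+\norm{\xi}_{\ell^1})^s$ as you state --- this is precisely why $\B_{\FS,s+1}$ rather than $\B_{\FS,s}$ is required, as you correctly acknowledge in your last sentence.
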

\begin{proof}
Let $f_e\in \G^{\FS}_{m,f}$ for $f\in\B_{\FS,m+1}$. Recall that \begin{equation}\label{eq:fourier_inversion_formula}
    f(x) = \int_{\R^d}\e^{i\braket{x}{\xi}}\hat{f_e}(\xi)d\xi
\end{equation}
for all $x\in \X$. The integral form of the Taylor remainder theorem for the exponential map $z\mapsto \e^{iz}$ around the origin up to order $m$ is given by
\begin{equation}\label{eq:exp_taylor}
    \e^{iz} = \sum_{k=0}^{m}\frac{i^k}{k!}z^k + \int_0^z\frac{i^{m+1}\e^{it}}{m!}(z-t)^{m}dt.
\end{equation}
Substituting \eqref{eq:exp_taylor} into the right-hand side of \eqref{eq:fourier_inversion_formula} gives
\begin{equation}\label{eq:fourier_inversion_formula_taylored}
    \int_{\R^d}\e^{i\braket{x}{\xi}}\hat{f_e}(\xi)d\xi = \int_{\R^d}\sum_{k=0}^{m}\frac{i^k}{k!}\braket{x}{\xi}^k\hat{f_e}(\xi)d\xi + \int_{\R^d}\int_0^{\braket{x}{\xi}}\frac{i^{m+1}\e^{it}}{m!}(\braket{x}{\xi}-t)^{m}dt\hat{f_e}(\xi)d\xi.
\end{equation}
For the series part, we observe that by the Fourier derivation identity 
\begin{equation}\label{eq:fourier_derivation_identity_applied}
    \int_{\R^d}\sum_{k=0}^{m}\frac{i^k}{k!}\braket{x}{\xi}^k\hat{f_e}(\xi)d\xi = \sum_{\abs{\alpha}\leq m}\frac{\partial^\alpha f(0)}{\alpha!}x^\alpha.
\end{equation}
For the remainder part, we observe that $\abs{\braket{x}{\xi}}\leq \norm{\xi}_{\ell^1}$, thus by \Cref{lemma:integral_bound_fix_lemma}
\begin{equation}
    \int_0^{\braket{x}{\xi}}\frac{i^{m+1}\e^{it}}{m!}(x-t)^{m}dt = \frac{i^{m+1}}{m!}\int_0^{\norm{\xi}_{\ell^1}}\bigg(\e^{it}\repu_{m}(\braket{x}{\xi}-t)+(-1)^{m-1}\e^{-it}\repu_{m}(-\braket{x}{\xi}-t)\bigg)dt.
\end{equation}
After doing the change of coordinates $u=\norm{\xi}_{\ell^1}t$ and substituting the resultant expression together with \eqref{eq:fourier_derivation_identity_applied} into the right-hand side of \eqref{eq:fourier_inversion_formula_taylored}, we get
\begin{equation}\label{eq:f_fourier_expansion_s_double_terms}
\begin{aligned}
    f(x) = \sum_{\abs{\alpha}\leq m}\frac{\partial^\alpha f(0)}{\alpha!}x^\alpha + \int_{\R^d}\int_{0}^1&\frac{i^{m+1}\norm{\xi}_{\ell^1}^{m+1}\hat{f_e}(\xi)}{m!}\bigg(\e^{i\norm{\xi}_{\ell^1}u}\repu_{m}\bigg(\braket{x}{\frac{\xi}{\norm{\xi}_{\ell^1}}}-u\bigg)\\&+\e^{-i\norm{\xi}_{\ell^1}u}\repu_{m}\bigg(\braket{x}{\frac{-\xi}{\norm{\xi}_{\ell^1}}}-u\bigg)\bigg)dud\xi.
\end{aligned}
\end{equation}
We can remove the second $\repu_{m}$ term by observing that
\begin{equation}\label{eq:f_fourier_expansion_s_substitute}
\begin{aligned}
    \int_{\R^d}\int_{0}^1&\frac{i^{m+1}\norm{\xi}_{\ell^1}^{m+1}\hat{f_e}(\xi)}{m!}\e^{-i\norm{\xi}_{\ell^1}u}\repu_{m}\bigg(\braket{x}{\frac{-\xi}{\norm{\xi}_{\ell^1}}}-u\bigg)dud\xi \\&= -\int_{\R^d}\int_{0}^1\frac{i^{m+1}\norm{\xi}_{\ell^1}^{m+1}\hat{f_e}(-\xi)}{m!}\e^{-i\norm{\xi}_{\ell^1}u}\repu_{m}\bigg(\braket{x}{\frac{\xi}{\norm{\xi}_{\ell^1}}}-u\bigg)dud\xi,
\end{aligned}
\end{equation}
where we used the coordinate map $\xi\mapsto -\xi$. Substituting \eqref{eq:f_fourier_expansion_s_substitute} into \eqref{eq:f_fourier_expansion_s_double_terms} gives
\begin{equation}\label{eq:f_fourier_expansion_s}
\begin{split}
    f(x) &= \sum_{\abs{\alpha}\leq m}\frac{\partial^\alpha f(0)}{\alpha!}x^\alpha \\\;&+ \int_{\R^d}\int_{0}^1\frac{i^{m+1}\norm{\xi}_{\ell^1}^{m+1}}{m!}\bigg(\hat{f_e}(\xi)\e^{i\norm{\xi}_{\ell^1}u}-\hat{f_e}(-\xi)\e^{-i\norm{\xi}_{\ell^1}u}\bigg)\repu_{m}\bigg(\braket{x}{\frac{\xi}{\norm{\xi}_{\ell^1}}}-u\bigg)dud\xi.
\end{split}
\end{equation}
From \Cref{lemma:series_repu_s} it follows that there exists a measure $\mu_{sum}\in \M(\Omega)$ such that
\begin{equation}
    \sum_{\abs{\alpha}\leq m}\frac{\partial^\alpha f(0)}{\alpha!}x^\alpha = \int_\Omega\repu_{m}(\braket{x}{w}+b)d\mu_{sum}(w,b).
\end{equation}
Simultaneously, we observe that
\begin{equation}
    f(x) = \int_{\Omega}\repu_{m}(\braket{x}{w}+b)d\mu(w,b),
\end{equation}
where the measure $\mu:=\mu_{sum}+\mu_{rem}$ is the sum of the measure $\mu_{sum}$ and the measure $\mu_{rem}$ given by
\begin{equation}
    d\mu_{rem}(\xi,u) = \int_0^1Re\bigg(\frac{i^{m+1}\norm{\xi}_{\ell^1}^{m+1}}{m!}\bigg(\hat{f_e}(\xi)\e^{i\norm{\xi}_{\ell^1}u}-\hat{f_e}(-\xi)\e^{-i\norm{\xi}_{\ell^1}u}\bigg)\bigg)d\Theta_{\#}(\lambda_{\R^{d}}\otimes\lambda_{[0,1]})(\xi,u)
\end{equation}
defined using the push-forward map
\begin{equation}
    \Theta: \R^d\cross[0,1] \to \S^d\cross[0,1], \; (\xi,u) \mapsto (\frac{\xi}{\norm{\xi}_{\ell^1}},u)
\end{equation}
with $\lambda_{\R^{d}}$ and $\lambda_{[0,1]}$ the Lebesgue measures on $\R^d$ and $[0,1]$ respectively. Hence, $\mu\in \G_{\repu_{m},f}$. Furthermore,
\begin{equation}
\begin{aligned}
    \norm{f}_{\B_{\repu_{m}}} 
    &\leq \int_{\Omega}(\norm{w}_{\ell^1}+\abs{b})^{m}d\abs{\mu}(w,b)  \\
    &\leq \int_{\Omega}(\norm{w}_{\ell^1}+\abs{b})^{m}d\abs{\mu_{sum}}(w,b)+\int_{\Omega}(\norm{w}_{\ell^1}+\abs{b})^{m}d\abs{\mu_{rem}}(w,b)  \\
    &\lesssim \norm{f_e}_{C^{m}_0(\R^d)}+\norm{\mu_{rem}}_{\M(\Omega)}  \\    &\lesssim \norm{\hat{f_e}}_{L^1(\R^d,(1+\norm{\cdot})^{m+1})},
\end{aligned}
\end{equation}
where we used Lemma 2.7 of [\cite{voigtlaender_lp_2022}] to bound $\norm{f_e}_{C^{m}_0(\R^d)}$. Taking the infimum over $f_e\in \G^{\FS}_{m,f}$ gives
\begin{equation}
    \norm{f}_{\B_{\repu_{m}}} \lesssim \norm{f}_{\B_{\FS,m+1}}.
\end{equation}
\end{proof}

\section{Discussion and conclusion}\label{sec:discussion}
In this paper, we have studied the effect of changing the activation function on the Barron spaces. This has been done by determining embeddings between two Barron spaces with different activation functions. 

We have shown that the Barron spaces with $\repu_s$ have a hierarchical structure, i.e. if $0\leq t \leq s$, then the Barron space with $\repu_s$ embeds into that with $\repu_t$. This structure is similar to well-known Sobolev spaces $H^s$ and the continuous function spaces $C^s$. In [\cite{e_observations_2022}], four PDEs with explicit formulas for their solutions are studied. These formulas can be derived using the Green's function associated with the PDE. They discuss several challenges when using Barron functions for the initial conditions and/or boundary conditions. When using Sobolev spaces, many of these challenges are overcome by assuming higher regularity. Some remaining challenges can potentially be solved by assuming higher $s$ for the Barron spaces with $\repu_s$. 

The embeddings, for which we assume that neither is a $\repu_s$, cover many of the changes that are made to existing activation function in order to find new ones to use. Examples of such changes are scaling and shifting (compare $logi$ with $\tanh$), taking a linear combination (compare leaky $\relu$ with $\relu$) and taking a derivative (compare $SoftPlus$ with $logi$). 

\begin{remark}
Throughout this work, we use $\X=[-1,1]^d$. This allows us to remove the dependence on $x$ when it appears in the bound of an integral. Relaxing this to, for instance $\R^d$, breaks many proofs. In particular, it breaks the hierachy of the RePU-Barron spaces. $\repu_s$-Barron functions grow at infinity at most like $\norm{x}^s$. This means that a function like $\repu_{s+1}$ is not in this space. We also used $\Omega=\R^{d+1}$. In contrast to the case with $\X$, this can be relaxed to smaller sets. However, the proof techniques used in this work suggest that, when one Barron space embeds into another, the latter will have a larger parameter space $\Omega$ than the former. 
\end{remark}

Although our results cover many activation functions, we have only provided affirmative statements, i.e. we provided statements that show a suitable push-forward map $\Theta$ exists and we provided no statements that show no such map can exist. To show that a particular embedding cannot exist, we can use the smoothness of the activation functions. If $\phi\in C^1(\R)$, then all functions
\begin{equation}
    \psi(x) = \int_{\R^{2}}\phi(xw+b)d\mu(w,b), \quad \int_{\R^{2}}\abs{w}+\abs{b}d\abs{\mu}(w,b)< \infty
\end{equation}
are $C^1(\R)$ too. This rules out embeddings of $\B_\psi$ into $\B_\phi$ when $\psi$ is at most Lipschitz continuous. Consider as an example the sawtooth wave function $\sawtooth_{A,p}$ with amplitude $A$ and period $p$ as activation function. This function has been used to show the relevance of depth in neural networks with $\relu$ as the activation function [\cite{telgarsky_representation_2015}]. It satisfies the identities
\begin{align}    
    \sawtooth_{A,p}(x) &= A\bigg( \frac{1}{2} - \frac{1}{\pi}\sum_{k=1}^\infty (-1)^k\frac{\sin(2\pi k p x)}{k}  \bigg) \label{eq:sawtooth_as_sin}\\
    \sin(x) &= \frac{1}{2}\int_0^{2\pi}\cos(y)\sawtooth_{1,2\pi}(x-y-\pi)dy
\end{align}
We have that $\B_{\sawtooth_{A,p}}$ is not included in $\B_{\sin}$ by the smoothness argument and an embedding of $\B_{\sin}$ into $\B_{\sawtooth_{A,p}}$ by \Cref{prop:linear_comb_embedding}.

Note that the $\sin$-Barron space does not the function $\sawtooth_{A,p}$, but it does contain the partial sums
\begin{equation}
    \sawtooth^N_{A,p}(x) = A\bigg( \frac{1}{2} - \frac{1}{\pi}\sum_{k=1}^N (-1)^k\frac{\sin(2\pi k p x)}{k}\bigg).
\end{equation}
for all finite $N$. This suggest that we can approximate $\sawtooth_{A,p}$-Barron functions using $\sin$-Barron functions. This idea is explored using the concept \textit{Barron approximation space}. We refer the reader to \cite{caragea_neural_2020} for more information about these spaces.

Our results are also limited to $\repu_s$ or Lipschitz activation functions. This restriction makes sure that, given an activation function $\sigma$, a function $f\in\B_{\sigma}$ of the form \eqref{eq:integral_formulation} is well-defined for all $\mu\in\G_{\sigma,f}$. An activation function like
\begin{equation}
    \sigma(x) = \abs{x}^2
\end{equation}
is not covered by this [\cite{sarao_mannelli_optimization_2020}]. This activation function is asymptotically quadratic and is thus not Lipschitz. To cover activation functions like this the Barron spaces can be adapted by redefining the Barron norm for continuous non-homogeneous functions as 
\begin{equation}
    \norm{f}_{\B_\sigma} = \inf_{\mu\in\G_{\sigma,f}}\int_\Omega(1+\norm{w}_{\ell^1}+\abs{b})^{p}d\abs{\mu}(w,b)
\end{equation}
with
\begin{equation}
    p = \argmin\Set[\bigg]{ q\in\N \given \forall x\in\X,(w,b)\in\Omega: \; \frac{\abs{\sigma(\braket{x}{w}+b)}}{(1+\norm{w}_{\ell^1}+\abs{b})^{q}}<\infty }.
\end{equation}
When $\sigma$ is Lipschitz continuous, $p=1$. Hence, this recovers the Barron norm in that case. Note that results like \Cref{prop:taylor_embedding} are still preserved, since for all $p\in\N$ we have
\begin{equation}
    \inf_{\mu\in\G_{\sigma,f}}\int_\Omega(1+\norm{w}_{\ell^1}+\abs{b})^{p}d\abs{\mu}(w,b) \geq \inf_{\mu\in\G_{\sigma,f}}\int_\Omega(1+\norm{w}_{\ell^1}+\abs{b})d\abs{\mu}(w,b).
\end{equation}

\section*{Acknowledgements}
TJH and CB acknowledge support by Sectorplan Bèta (the Netherlands) under the focus area \enquote{Mathematics of Computational Science}. CB acknowledges support by the European Union's Horizon 2020 research and innovation programme under the Marie Skłodowska-Curie grant agreement No 777826 (NoMADS). TJH thanks José Iglesias for the fruitful discussions, and we thank the reviewers for their valuable feedback.

\printbibliography

\appendix
\section{Taylor to RePU proofs}\label{sec:appendix_A}
This lemma contains the proofs for \Cref{lemma:series_repu_s} and \Cref{lemma:integral_bound_fix_lemma}. These are repeated here as \Cref{lemma:series_repu_s_appendix} and \Cref{lemma:integral_bound_fix_lemma_appendix} respectively.

To prove \Cref{lemma:series_repu_s_appendix} we make use of \Cref{lemma:change_of_basis}. This lemma is similar to Theorem 2 in [\cite{chen_power_2022}]. However, the proof for the full statement in [\cite{chen_power_2022}] is contained in a currently unpublished paper. Hence, for completeness, we have provided this proof.

\begin{lemma}\label{lemma:change_of_basis}
Let $m,d\in\N$. There exist $p:=\binom{m+d}{d}$ pairs $(w_i,b_i)\in\R^{d+1}$ with $i\in \Set{1,\hdots,p}$ such that the set of polynomials $\Theta_{m}:=\Set[\bigg]{(\braket{x}{w_1}+b_1)^{m},(\braket{x}{w_2}+b_2)^{m},\hdots,(\braket{x}{w_p}+b_p)^{m}}$ forms a basis for the space of polynomials in $d$ variables with degree at most $m$.
\end{lemma}
\begin{proof}
There are $p$ multi-indices with total degree at most $s$. Let the sequence $\Set{\alpha_n}_{n=1}^p$ be the set with these multi-indices in inverse lexicographical order. The statement holds if we can choose the pairs $(w_i,b_i)\in\R^{d+1}$ such that there exists an invertible matrix $W$ which satisfies
\begin{equation}\label{eq:change_of_basis}
    WX = P_{m}
\end{equation}
where $X=\mqty( x^{\alpha_1}, \hdots, x^{\alpha_p})^\intercal$ and $P_s=\bigg((\braket{x}{w_1}+b_1)^{m},\hdots,(\braket{x}{w_p}+b_p)^{m}\bigg)^\intercal$. We will construct $W$ and use the theory of generalized Vandermonde matrices to show that it is invertible [\cite{gantmacher_theory_2009}].

Observe that for $(w,b)\in\Omega$ and $x\in\X$ we have by simple combinatorics that
\begin{equation}
\begin{aligned}
    (\braket{x}{w}+b)^{m} 
    &= (\braket{(x,1)}{(w,b)})^{m} \\
    &= \sum_{\abs{\beta}=m}\binom{m}{\beta}(w,b)^\beta(x,1)^\beta \\
    &= \sum_{\abs{\gamma}\leq m}\binom{m}{\abs{\gamma}}\binom{\abs{\gamma}}{\gamma}w^\gamma b^{m-\abs{\gamma}} x^\gamma \\
    &= \sum_{n=1}^p\binom{m}{\abs{\alpha_n}}\binom{\abs{\alpha_n}}{\alpha_n}w^{\alpha_n} b^{m-\abs{\alpha_n}} x^{\alpha_n},
\end{aligned}    
\end{equation}
where $\beta$ and $\gamma$ are multi-indices of length $d+1$ and length $d$ respectively. This shows that a matrix $W$ with elements of the form $W_{ij}=\binom{m}{\abs{\alpha_j}}\binom{\abs{\alpha_j}}{\alpha_j}w_i^{\alpha_j} b_i^{m-\abs{\alpha_j}}$ satisfies \eqref{eq:change_of_basis}. What remains is choosing each $(w_i,b_i)$ such that $W$ is invertible.

If $\Tilde{W}$ is a matrix with elements $w_i^{\alpha_j} b_i^{m-\abs{\alpha_j}}$ and $D$ a diagonal matrix with entries $D_{jj}=\binom{m}{\abs{\alpha_j}}\binom{\abs{\alpha_j}}{\alpha_j}$, then
\begin{equation}
    det(W) = det(D)det(\Tilde{W}).
\end{equation}
Clearly, $det(D)>0$. If we take $b_i=1$, $1 < w_{1,1} < w_{2,1} < \hdots w_{p,1} < \infty$, and $w_{i,k}=w^{1+(k-1)\sqrt{\prime(k)}}_{i,1}$, where $\prime(k)$ is the $k^{\text{th}}$ prime number, for all $i\in\Set{1,\hdots p}$, then each element of $\Tilde{W}$ is of the form
\begin{equation}
    \Tilde{W}_{ij} = w_{i,1}^{\abs{\alpha_j}+\sum_{k=1}^d(k-1)\sqrt{\prime(k)}\alpha_{j,k}}.
\end{equation}
The bases are fixed columnwise, but strictly increasing rowwise. The exponents are fixed rowwise, but distinct columnwise. Let $\Tilde{\Tilde{W}}$ be $\Tilde{W}$ with its columns reordered such that the exponents are in increasing order. By construction, $\Tilde{\Tilde{W}}$ is a generalized Vandermonde matrix. These matrices have a non-zero determinant [page 99 of \cite{gantmacher_theory_2009}]. Reordering the columns at most switches the sign of the determinant. Hence, $\Tilde{W}$ is invertible and thus $W$ is too.
\end{proof}

Using this lemma, we are now able to prove the following equality.

\begin{lemma}\label{lemma:series_repu_s_appendix}
Let $p:\R^{d}\to \R$ be a polynomial of degree less or equal to $m\in \N$. Then there exists a measure $\nu\in\M(\R^{d+1})$ such that
\begin{equation}\label{eq:20_appendix}
    p(x) = \int_{\R^{d+1}} \repu_{m}(\braket{x}{w}+b)d\nu(w,b), \qquad x\in \R^{d}.
\end{equation}
\end{lemma}
\begin{proof}
We can use \Cref{lemma:change_of_basis} to write the polynomial $p$ as a linear combination of the basis functions in $\Theta_s$, 
\begin{equation}
     p(x) = \sum_{i=1}^p \kappa_i(\braket{x}{w_i}+b_i)^{m}
\end{equation}
where $\kappa_i\in\R$. Combined with the identity
\begin{equation}\label{eq:identity}
    z^{m} = \repu_{m}(z)+(-1)^{m-1}\repu_s(-z)
\end{equation}
for all $z\in\R$, we can conclude that the measure $\nu=\nu_1+\nu_2$ defined using
\begin{equation}
\begin{aligned}
    \nu_1 &= \sum_{i=1}^p\kappa_i\delta_{(w_i,b_i)} \\
    \nu_2 &= \sum_{i=1}^p(-1)^{m-1}\kappa_i\delta_{(-w_i,-b_i)}
\end{aligned}
\end{equation}
satisfies \eqref{eq:20_appendix}.
\end{proof}

There are several things to note regarding (the proofs of) \Cref{lemma:change_of_basis} and \Cref{lemma:series_repu_s}. First, we chose $w_i$ with $\norm{w_i}_{\ell^1} \leq d\abs{w_{i,1}}^{(d-1)\sqrt{\prime(d)}}$ and $w_{i,1}>1$. This upper bound scales exponentially with dimension. Different choices for $w_{p,k}$ are available, like choosing $w_{i,k}=w^{\frac{(k-1)\sqrt{\prime(k)}}{d\sqrt{\prime(d)}}}_{i,1}$ with $w_{i,1}$ sufficiently small gives $\norm{w_i}_{\ell^1}\leq 2d$. Second, both proofs are proven for $d\in\N$, whereas activation functions are univariate functions. We use the higher dimensional case when discussing the spectral Barron spaces in \Cref{sec:taylor}. 

\Cref{lemma:series_repu_s} can be used to write the $(x-t)^s$ in the remainder part of \eqref{eq:taylor_remainder} as a linear combination of $\repu_s$'s. However, this does not allow us to write the remainder part using a single measure $\nu\in\M(\R^2)$ for all $x\in\R$, because the integral bounds depend on $x$. In the second lemma, we deal with this by extending the domain of integration. 

\begin{lemma}\label{lemma:integral_bound_fix_lemma_appendix}
Let $s\geq 0$ and $c>0$. When $f\in L^1([-c,c])$, we have
\begin{equation}\label{eq:integral_bound_fix_lemma_appendix}
    \int_0^z f(u)(z-u)^sdu = \int_0^c f(u)\repu_s(z-u)+(-1)^{s-1}f(-u)\repu_s(-z-u)du
\end{equation}
for all $z\in [-c,c]$.
\end{lemma}
\begin{proof}
Depending on the sign of $z$ we can write the left-hand side equivalently as
\begin{equation}
    \int_0^z(z-u)^{s}f(u)du = \begin{cases}
        (-1)^{s-1}\int_0^c (-z-u)^{s}\step(-z-u)f(-u)du & -c \leq z\leq 0 \\
        \phantom{(-1)^{s-1}}\int_0^c (z-u)^{s}\step(z-u)f(u)du & 0 \leq z\leq c
    \end{cases}
\end{equation}
where $\step$ is the Heaviside step function. Note that the term $(-1)^{s-1}$ restores the sign for even $s$. Since both representations are zero in the domain of the other, we can add them to obtain
\begin{equation}\label{eq:integral_bound_fix_lemma_eq_with_indicator}
    \int_0^z(z-u)^{s}f(u)du = \int_0^c (z-u)^{s}\step(z-u)f(u)+(-1)^{s-1}(-z-u)^{s}\step(-z-u)f(-u)du.
\end{equation}
Observe that
\begin{equation}\label{eq:repu_step_with_poly}
\begin{aligned}
    (z-u)^{s}\step(z-u) &= \repu_{s}(z-u), \\
    (-z-u)^{s}\step(-z-u) &= \repu_{s}(-z-u).
\end{aligned}
\end{equation}
Substitution of \eqref{eq:repu_step_with_poly} into \eqref{eq:integral_bound_fix_lemma_eq_with_indicator} gives \eqref{eq:integral_bound_fix_lemma_appendix}.
\end{proof}

\end{document}